\newcommand{\note}[1]{}
\renewcommand{\note}[1]{~\\\frame{\begin{minipage}[c]{1\textwidth}\vspace{2pt}\center{#1}\vspace{2pt}\end{minipage}}\vspace{3pt}\\}
\newcommand{\hide}[1]{}
\definecolor{Blue}{rgb}{0.9,0.3,0.3}
\newcommand{\squishlist}{
   \begin{list}{$\bullet$}
    { \setlength{\itemsep}{0pt}      \setlength{\parsep}{3pt}
      \setlength{\topsep}{3pt}       \setlength{\partopsep}{0pt}
      \setlength{\leftmargin}{1.5em} \setlength{\labelwidth}{1em}
      \setlength{\labelsep}{0.5em} } }
\newcommand{\squishlisttwo}{
   \begin{list}{$\bullet$}
    { \setlength{\itemsep}{0pt}    \setlength{\parsep}{0pt}
      \setlength{\topsep}{0pt}     \setlength{\partopsep}{0pt}
      \setlength{\leftmargin}{2em} \setlength{\labelwidth}{1.5em}
      \setlength{\labelsep}{0.5em} } }
\newcommand{\squishend}{
    \end{list}  }
\newcommand{\denselist}{\itemsep 0pt\topsep-6pt\partopsep-6pt}
\newcommand{\myvec}[1]{\mathbf{#1}}
\newcommand{\myvecsym}[1]{\boldsymbol{#1}}
\newcommand{\vDelta}{\myvecsym{\Delta}}
\newcommand{\vPhi}{\myvecsym{\Phi}}
\newcommand{\vPi}{\myvecsym{\Pi}}
\newcommand{\vxi}{\myvecsym{\xi}}
\newcommand{\va}{\myvec{a}}
\newcommand{\vc}{\myvec{c}}
\newcommand{\vf}{\myvec{f}}
\newcommand{\vk}{\myvec{k}}
\newcommand{\vm}{\myvec{m}}
\newcommand{\vv}{\myvec{v}}
\newcommand{\vx}{\myvec{x}}
\newcommand{\vy}{\myvec{y}}
\newcommand{\vz}{\myvec{z}}
\newcommand{\vA}{\myvec{A}}
\newcommand{\vB}{\myvec{B}}
\newcommand{\vD}{\myvec{D}}
\newcommand{\vG}{\myvec{G}}
\newcommand{\vI}{\myvec{I}}
\newcommand{\vK}{\myvec{K}}
\newcommand{\vM}{\myvec{M}}
\newcommand{\vO}{\myvec{O}}
\newcommand{\vP}{\myvec{P}}
\newcommand{\vR}{\myvec{R}}
\newcommand{\vS}{\myvec{S}}
\newcommand{\vU}{\myvec{U}}
\newcommand{\vV}{\myvec{V}}
\newcommand{\calD}{{\cal D}}
\newcommand{\calH}{{\cal H}}
\newcommand{\calS}{{\cal S}}
\newcommand{\calT}{{\cal T}}
\newcommand{\calX}{{\cal X}}
\newcommand{\calY}{{\cal Y}}
\newcommand{\data}{\calD}
\newcommand{\be}{\begin{equation}}
\newcommand{\ee}{\end{equation}}
\newcommand{\bea}{\begin{eqnarray}}
\newcommand{\eea}{\end{eqnarray}}
\newcommand{\beaa}{\begin{eqnarray*}}
\newcommand{\eeaa}{\end{eqnarray*}}
\DeclareMathAlphabet{\mathpzc}{OT1}{pzc}{m}{n}
\DeclareMathOperator*{\argmax}{arg\,max}
\newtheorem{mydefinition}{Definition}
\newtheorem{proposition}[mydefinition]{Proposition}
\newtheorem{theorem}[mydefinition]{Theorem}
\newtheorem{remark}[mydefinition]{Remark}
\newtheorem{lemma}[mydefinition]{Lemma}
\newcommand{\xbest}{\mathbf{\vx}^{+}}
\title{Bayesian Optimization in a Billion Dimensions \\ via Random Embeddings}
\author{\name Ziyu Wang \email ziyu.wang@cs.ox.ac.uk \\
           \addr Department of Computer Science, 
						University of Oxford
       \AND
       \name Frank Hutter \email fh@cs.uni-freiburg.de \\
           \addr Department of Computer Science,
						University of Freiburg
			\AND \name Masrour Zoghi \email m.zoghi@uva.nl \\
       \addr 
						Department of Computer Science,
						University of Amsterdam
       \AND
       \name David Matheson \email davidm@cs.ubc.ca \\
           \addr Department of Computer Science,
						University of British Columbia
       \AND
       \name Nando de Freitas \email nando@cs.ox.ac.uk \\
           \addr Department of Computer Science,
            University of Oxford\\
            \addr Canadian Institute for Advanced Research
			}
\begin{document}
\maketitle


\begin{abstract}
Bayesian optimization techniques have been successfully applied to robotics, planning, sensor placement, recommendation, advertising, intelligent user interfaces and automatic algorithm configuration. Despite these successes, the approach is restricted to problems of moderate dimension, and several 
workshops on Bayesian optimization have identified its scaling to high-dimensions as one of the holy grails of the field. 
In this paper, we introduce a novel random embedding idea to attack this problem.
The resulting Random EMbedding Bayesian Optimization (REMBO) algorithm is very simple, has important invariance properties, 
and applies to domains with both categorical and continuous variables. 
We present a thorough theoretical analysis of REMBO. 
Empirical results confirm that REMBO can effectively solve problems with billions of dimensions, provided the intrinsic dimensionality is low. 
They also show that REMBO achieves state-of-the-art performance in optimizing the 47 discrete parameters of a popular mixed integer linear programming solver. 
\end{abstract}

\section{Introduction}
\label{sec:introduction}
Let $f: {\cal X} \to \mathbb{R}$ be a function on a compact subset ${\cal X} \subseteq \mathbb{R}^D$. We address the following global optimization problem
\[ \vx^{\star} = \argmax_{\vx \in {\cal X}} f(\vx). \]
We are particularly interested in objective functions $f$ that may satisfy one or more of the following criteria: they do not have a closed-form expression, are expensive to evaluate, do not have easily available derivatives, or are non-convex. We treat $f$ as a \emph{blackbox} function that only allows us to query its function value at arbitrary $x\in \cal{X}$.
To address objectives of this challenging nature, we adopt the Bayesian optimization framework. 

In a nutshell, in order to optimize a blackbox function $f$, Bayesian optimization uses a prior distribution that captures our beliefs about the behavior of $f$,
and updates this prior with sequentially acquired data.
Specifically, it iterates the following phases:
(1)~use the prior to decide at which input $x\in \cal X$ to query $f$ next; 
(2)~evaluate $f(x)$; and (3)~update the prior based on the new data $\langle{}x, f(x)\rangle$.
Step 1 uses a so-called \emph{acquisition function} that quantifies the expected value of learning the value of $f(x)$ for each $x \in \cal X$. This procedure is illustrated in Figure~\ref{Fig:BOpic}.
\begin{figure}[t]
\begin{center}
\includegraphics[width=13cm]{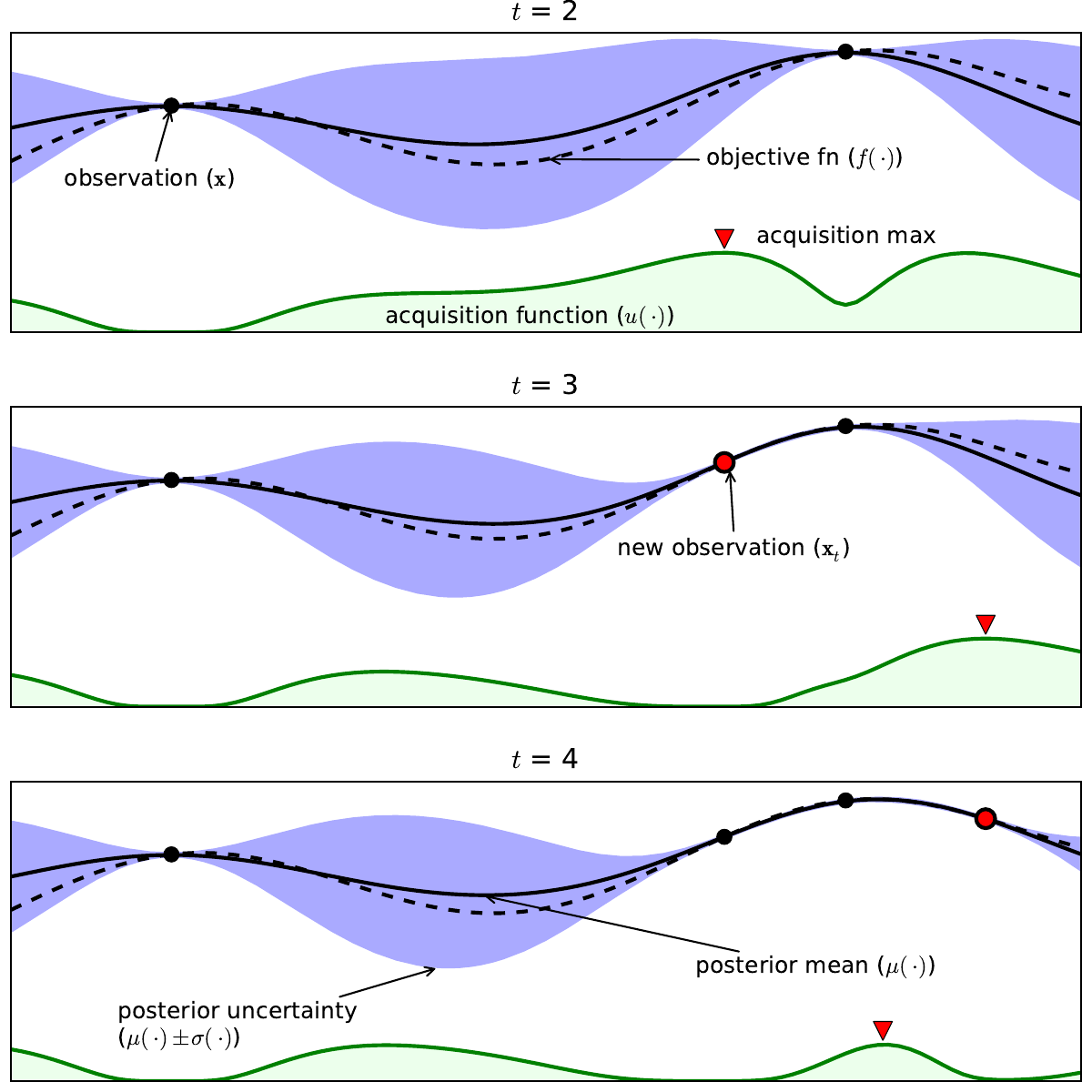}
\end{center}
\caption{Three consecutive iterations of Bayesian optimization for a toy one-dimensional problem. The unknown objective function is approximated with at Gaussian process (GP) at each iteration. The figure shows the mean and confidence intervals for this process.
It also shows the acquisition function in the lower green shaded plots. The acquisition is high where the GP predicts a high objective (exploitation) and where the prediction uncertainty is high (exploration). 
Note that the area on the far left remains under-sampled, as (despite having high uncertainty) it is correctly predicted to be unlikely to improve over the highest observation.}
\label{Fig:BOpic}
\end{figure}

%
%
The role of the acquisition function is to trade off exploration and exploitation; popular choices include Thompson sampling \cite{Thompson:1933,Hoffman:2014}, probability of improvement \cite{Jones:2001}, expected improvement \cite{Mockus:1994}, upper-confidence-bounds \cite{Srinivas:2010}, and online portfolios of these \cite{Hoffman:2011}.  
These are typically optimized by choosing 
points where the predictive mean is high (exploitation) and where the variance is large (exploration). Since they typically have an analytical expression that is easy to evaluate, they are much easier to optimize than the original objective function, using off-the-shelf numerical optimization algorithms.\footnote{This optimization step can in fact be circumvented when using treed multi-scale optimistic optimization as recently demonstrated by \citeA{Wang:2014}. There also exist several more involved Bayesian non-linear experimental design approaches for constructing the acquisition function, where the utility to be optimized involves an entropy of an aspect of the posterior. This includes the work of \citeA{Hennig:2012} for finding maxima of functions, the works of \citeA{Kueck:2006} and \citeA{Kueck:2009} for learning functions, and the work of \citeA{Hoffman:2009} for estimating Markov decision processes. These works rely on expensive approximate inference methods for computing intractable integrals.} 

The term \emph{Bayesian optimization} was coined several decades ago by Jonas Mo{\v c}kus \citeyear{Mockus:1982}. 
A popular version of the method is known as \emph{efficient global optimization} in the experimental design literature since the 
1990s \cite{Jones:1998}. 
Often, the approximation of the objective function is obtained using Gaussian process~(GP) priors. For this reason, the technique is also referred to as \emph{GP bandits} \cite{Srinivas:2010}. However, many other approximations of the objective have been proposed, including Parzen estimators \cite{Bergstra:2011}, Bayesian parametric models \cite{Wang:2011}, treed GPs \cite{Gramacy:2004} and random forests \cite{Brochu:2009,Hutter:2009,Hutter:2011}. 
These may be more suitable than GPs when the number of iterations grows without bound, or when the objective function is believed to have discontinuities. We also note that often assumptions on the smoothness of the objective function are encoded without use of the Bayesian paradigm, while leading to similar algorithms and theoretical guarantees \cite<see, for example,>[and the references therein]{Bubeck:2011}.
There is a rich literature on Bayesian optimization, and for further details we refer readers to more tutorial treatments \cite{Brochu:2009,Jones:1998,Jones:2001,Lizotte:2011,Mockus:1994,Osborne:2009} and recent theoretical results~\cite{Srinivas:2010,Bull:2011,deFreitas:2012}.

Bayesian optimization has been demonstrated to outperform other state-of-the-art blackbox optimization techniques when function evaluations are expensive and the number of allowed function evaluations is therefore low~\cite{HutHooLey13:BBOB}. In recent years, it has found increasing use in the machine learning community~\cite{Rasmussen:2003,Brochu:2007,martinez-cantin:2007,Lizotte:2008_IJCAI,Frazier:2009,Azimi:2010,Hamze:2011,Azimi:2011,Hutter:2011,Bergstra:2011,Gramacy:2011,Denil:2012,Mahendran:2012,Azimi:2012,Hennig:2012,Marchant:2012,Snoek:2012,Swersky:2013,Thornton:2013}. Despite many success stories, the approach is restricted to problems of moderate dimension, typically up to about 10. Of course, for a great many problems this is all that is needed. However, to advance the state of the art, we need to scale the methodology to high-dimensional parameter spaces. This is the goal of this paper.

It is difficult to scale Bayesian optimization to high dimensions. To ensure that a global optimum is found, we require good coverage of $\mathcal{X}$, but as the dimensionality increases, the number of evaluations needed to cover $\mathcal{X}$ increases exponentially. As a result, there has been little progress on this challenging problem, with a few exceptions.
\citeA{Bergstra:2011} introduced a non-standard Bayesian optimization method based on a tree of one-dimensional density estimators and applied it successfully to optimize the 238 parameters of a complex vision architecture \cite{Bergstra:model_search}.
\citeA{Hutter:2011} used random forests models in Bayesian optimization to achieve state-of-the-art performance in optimizing up to 76 mixed discrete/continuous parameters of algorithms for solving hard combinatorial problems, and to successfully carry out combined model selection and hyperparameter optimization for the 768 parameters of the Auto-WEKA framework~\cite{Thornton:2013}. 
\citeA{Eggensperger:2013} showed that these two methods indeed yielded the best performance for high-dimensional hyperparameter optimization (e.g., in deep belief networks). However, both are based on weak uncertainty estimates that can fail even for the optimization of very simple functions and lack theoretical guarantees. 

In the \emph{linear} bandits case, \citeA{Carpentier:2012} recently proposed a compressed sensing strategy to attack problems with a high degree of sparsity. 
Also recently, \citeA{Chen:2012} made significant progress by introducing a two stage strategy for optimization and variable selection of high-dimensional GPs. In the first stage, sequential likelihood ratio tests, with a couple of tuning parameters, are used to select the relevant dimensions. 
This, however, requires the relevant dimensions to be axis-aligned with an ARD kernel. Chen and colleagues provide empirical results only for synthetic examples (of up to 400 dimensions), but they provide key theoretical guarantees.

Many researchers have noted that for certain classes of problems most dimensions do not change the objective function significantly; examples include hyper-parameter optimization for neural networks and deep belief networks~\cite{Bergstra:2012}, as well as other machine learning algorithms and various state-of-the-art algorithms for solving $\mathcal{NP}$-hard problems~\cite{Hutter:2014}.
That is to say these problems have \emph{``low effective dimensionality''}. To take advantage of this property, \citeA{Bergstra:2012} proposed to simply use random search for optimization -- the rationale being that points sampled uniformly at random in each dimension can densely cover each low-dimensional subspace. As such, random search can exploit low effective dimensionality \emph{without knowing which dimensions are important}. In this paper, we exploit the same property in a new Bayesian optimization variant based on random embeddings. 

Figure \ref{fig:simple_embedding} illustrates the idea behind random embeddings in a nutshell.
Assume we know that a given $D=2$ dimensional black-box function $f(x_1, x_2)$ only has $d=1$ important dimensions, but we do not know which of the two dimensions is the important one.
We can then perform optimization in the embedded 1-dimensional subspace defined by $x_1=x_2$ since this is guaranteed to include the optimum. 

\begin{figure}[tb]
  \includegraphics[scale=0.32]{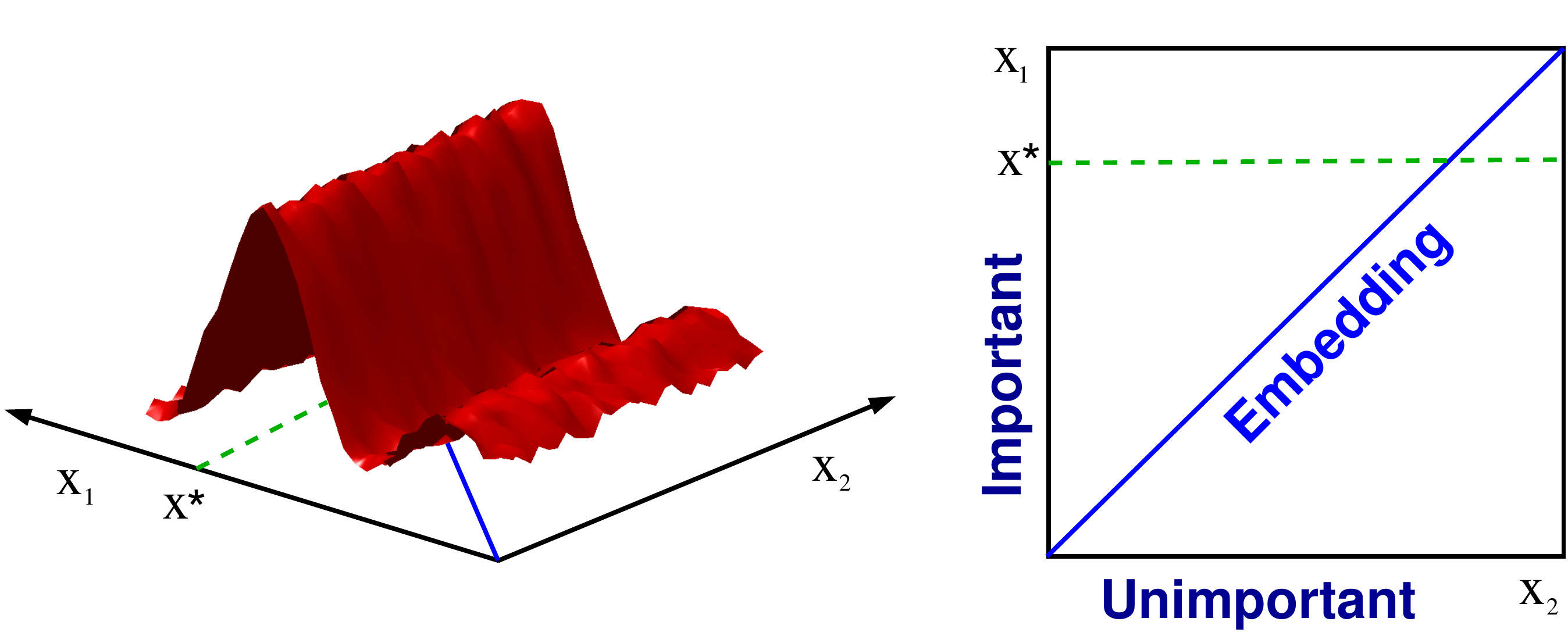}
  \centering
  \caption{This function in D=2 dimesions only has d=1 \emph{effective dimension}: the vertical axis indicated with the word important on the right hand side figure. Hence, the $1$-dimensional embedding includes the $2$-dimensional function's optimizer. It is more efficient to search for the optimum along the 1-dimensional random embedding than in the original 2-dimensional space.}
  \label{fig:simple_embedding}
\end{figure}

As we first demonstrated in a recent IJCAI conference paper~\cite{WangEtAl13}, random embeddings enable us to scale Bayesian optimization to arbitrary $D$ provided the objective function has low intrinsic dimensionality.
Importantly, the algorithm associated with this idea, which we called REMBO, is not restricted to cases with axis-aligned intrinsic dimensions but applies to any $d$-dimensional linear subspace. \citeA{Djolonga:2013} recently proposed an adaptive, but more expensive, variant of REMBO with theoretical guarantees.

In this journal version of our work, we expand the presentation to provide more details throughout.
In particular, we expand our description of the strategy for selecting the boundaries of the low-dimensional space and for setting the kernel length scale parameter; we show by means of an additional application (automatic configuration of random forest body-part classifiers) that the performance of our technique does not collapse when the problem does not have an obvious low effective dimensionality.
%
Our experiments (Section \ref{sec:experiments}) also show that REMBO can solve problems of previously untenable high extrinsic dimensions, and that REMBO can achieve state-of-the-art performance for optimizing the 47 discrete parameters of a popular mixed integer linear programming solver.


\section{Bayesian Optimization}\label{sec:bo}

As mentioned in the introduction, Bayesian optimization has two ingredients that need to be specified: The prior and the acquisition function. In this work, we adopt GP priors. We review GPs very briefly and refer the interested reader to the book by \citeA{Rasmussen:2006}.
A GP is a distribution over functions specified by its mean function $m(\cdot)$ and covariance $k(\cdot,\cdot)$. More specifically, given a set of points $\vx_{1:t}$, with $\vx_i \in \mathbb{R}^D$, we have 
$$
\vf(\vx_{1:t}) \sim \mathcal{N}(\vm(\vx_{1:t}), \vK(\vx_{1:t}, \vx_{1:t})),
$$
where $\vK(\vx_{1:t}, \vx_{1:t})_{i,j} = k(\vx_i, \vx_j)$
serves as the covariance matrix. A common choice of $k$ is the squared exponential function (see Definition~\ref{def:sekernel} on page~\pageref{def:sekernel}),
but many other choices are possible depending on our degree of belief about the smoothness of the objective function.

An advantage of using GPs lies in their analytical tractability. In particular, given observations $\vx_{1:t}$ with corresponding values $\vf_{1:t}$, where $f_i = f(\vx_i)$,  and a new point $\vx^{*}$, the joint distribution is given by:
$$
\begin{bmatrix}\vf_{1:t} \\
f^* \end{bmatrix} \sim \mathcal{N}\left( 
\begin{bmatrix}\vm(\vx_{1:t}) \\
m^* \end{bmatrix},  
\begin{bmatrix}
\vK(\vx_{1:t}, \vx_{1:t}) & \vk(\vx_{1:t}, \vx^*)\\
\vk(\vx^*, \vx_{1:t}) & k(\vx^*, \vx^*)\end{bmatrix}\right).
$$
For simplicity, we assume that $\vm(\vx_{1:t}) = \mathbf{0}$ and $m^* = 0$. Using the Sherman-Morrison-Woodbury formula, 
one can easily arrive at the posterior predictive distribution:
$$
f^* | \data_t, \vx^* \sim \mathcal{N}(\mu(\vx^*|\data_t), \sigma(\vx^*|\data_t)),
$$
with data $\data_t = \{\vx_{1:t}, \vf_{1:t} \}$, and mean and variance
\begin{eqnarray}
\nonumber{}\mu(\vx^*|\data_t) & = & \vk(\vx^*, \vx_{1:t}) \vK(\vx_{1:t}, \vx_{1:t})^{-1} \vf_{1:t}\\
\nonumber{}\sigma(\vx^*|\data_t) & = & k(\vx^*, \vx^*)-\vk(\vx^*, \vx_{1:t}) \vK(\vx_{1:t},\vx_{1:t})^{-1} \vk(\vx_{1:t}, \vx^*).\end{eqnarray}
That is, we can compute the posterior predictive mean $\mu(\cdot)$ and variance $\sigma(\cdot)$ exactly for any point $\vx^*$.

At each iteration of Bayesian optimization, one has to re-compute the predictive mean and variance. These two quantities are used to construct the second ingredient of Bayesian optimization: The acquisition function. In this work, we report results for the expected improvement acquisition function~\cite{Mockus:1982,Vazquez:2011,Bull:2011}:
\[\nonumber{}u(\vx|\mathcal{D}_t)=\mathbb{E}(\max\{0,f_{t+1}(\vx) - f(\xbest)\} |\data_t).\]
In this definition, $\xbest = \argmax_{\vx \in \{\vx_{1:t} \} }f(\vx)$
is the element with the best objective value in the first $t$ steps of the optimization process. The next query is:
\[\nonumber{}\vx_{t+1} = \argmax_{\vx \in {\cal X}} u(\vx|\mathcal{D}_t).\]
Note that this utility favors the selection of points with high variance (points in regions not well explored) and points with high mean value (points worth exploiting). We also experimented with the UCB acquisition function \cite{Srinivas:2010,deFreitas:2012} and found it to yield similar results. The optimization of the closed-form acquisition function can be carried out by off-the-shelf numerical optimization procedures, such as DIRECT \cite{Jones:1993} and CMA-ES~\cite{Hansen:2001:CDS:1108839.1108843}; it is only based on the GP model of the blackbox function $f$ and does not require additional evaluations of $f$. 

The Bayesian optimization procedure is shown in Algorithm~\ref{alg:bo}. 
\begin{algorithm}
\caption{Bayesian Optimization}
\label{alg:bo}
\begin{algorithmic}[1]
{
\STATE Initialize $\mathcal{D}_{0}$ as $\emptyset$.
\FOR{$t=1,2,\dots$}
  \STATE Find $\vx_{t+1}\! \in\! \mathbb{R}^D$ by optimizing the acquisition function $u$: $\vx_{t+1} \!= \!\argmax_{\vx \in {\cal X}} u(\vx|\mathcal{D}_t).$ 
  \STATE Augment the data $\mathcal{D}_{t+1} = \mathcal{D}_{t} \cup \{(\vx_{t+1}, f(\vx_{t+1}))\}$.
  \STATE Update the kernel hyper-parameters.
\ENDFOR
}
\end{algorithmic}
\end{algorithm}

\section{Random Embedding for Bayesian Optimization}\label{sec:rembo}
Before introducing our new algorithm and its theoretical properties, we need to define what we mean by effective dimensionality formally.
\begin{mydefinition}\label{def:effdim}
A function $f: \mathbb{R}^{D} \rightarrow \mathbb{R}$ is said to have \textbf{effective dimensionality} $d_e$, with $d_e \leq D$, if 
\begin{itemize}
\denselist
	\item there exists a linear subspace ${\cal T}$ of dimension $d_e$ such that for all $\vx_{\top} \in {\cal T} \subset \mathbb{R}^D$ and $\vx_{\bot} \in {\cal T}^{\bot} \subset \mathbb{R}^D$, we have $f(\vx_{\top} + \vx_{\bot}) = f(\vx_{\top})$, where ${\cal T}^{\bot}$ denotes the orthogonal complement of ${\cal T}$; and
	\item $d_e$ is the smallest integer with this property.
\end{itemize}
We call ${\cal T}$ the \textbf{effective subspace} of $f$ and ${\cal T}^{\bot}$ the \textbf{constant subspace}.
\end{mydefinition}
\vspace{-2mm}
This definition simply states that the function does not change along the coordinates $\vx_{\bot}$, and this is why we refer
to ${\cal T}^{\bot}$ as the {constant subspace}.
Given this definition,
the following theorem shows that problems of low effective dimensionality can be solved via random embedding.

\begin{theorem}
\label{prop:1}
Assume we are given a function $f: \mathbb{R}^{D} \rightarrow \mathbb{R}$ with effective dimensionality $d_e$ and a random matrix $\vA \in \mathbb{R}^{D\times d}$ with independent entries sampled according to $\mathcal{N}(0, 1)$ and $d\geq d_e$. Then, with probability 1, for any $\vx \in \mathbb{R}^D$, there exists a $ \vy \in \mathbb{R}^d$ such that $f(\vx) = f(\vA\vy)$.
\end{theorem}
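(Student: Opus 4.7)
The plan is to reduce the problem to a linear algebra statement about the composition of the random embedding with the orthogonal projection onto $\mathcal{T}$, and then exploit the rotational invariance of the Gaussian distribution together with the fact that a random Gaussian matrix has full rank almost surely.

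First I would decompose any $\vx \in \mathbb{R}^D$ as $\vx = \vx_{\top} + \vx_{\bot}$ with $\vx_{\top} \in \mathcal{T}$ and $\vx_{\bot} \in \mathcal{T}^{\bot}$, so that by Definition~\ref{def:effdim} we have $f(\vx) = f(\vx_{\top})$. Applying the same decomposition to any candidate image $\vA\vy$, we see that proving the theorem reduces to finding, for every $\vx_{\top} \in \mathcal{T}$, some $\vy \in \mathbb{R}^d$ such that $P_{\mathcal{T}}(\vA \vy) = \vx_{\top}$, where $P_{\mathcal{T}}$ denotes orthogonal projection onto $\mathcal{T}$. Indeed, once this holds, $f(\vA\vy) = f(P_{\mathcal{T}}\vA\vy + P_{\mathcal{T}^{\bot}}\vA\vy) = f(P_{\mathcal{T}}\vA\vy) = f(\vx_{\top}) = f(\vx)$.

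Next, I would fix an orthonormal basis of $\mathcal{T}$ and assemble it as the columns of a $D \times d_e$ matrix $\vV$, so that $P_{\mathcal{T}} = \vV\vV^{T}$ and the surjectivity requirement becomes: the $d_e \times d$ matrix $\vV^{T}\vA$ has rank $d_e$. The main obstacle is establishing this rank condition with probability one. The key observation is that, because $\vV$ has orthonormal columns, each column of $\vV^{T}\vA$ equals $\vV^{T}$ applied to the corresponding column of $\vA$; by rotational invariance of the standard Gaussian measure, the entries of $\vV^{T}\vA$ are again i.i.d.\ $\mathcal{N}(0,1)$. Since $d \geq d_e$, the set of rank-deficient $d_e \times d$ matrices is the zero set of the nonzero polynomial given by the sum of squared $d_e \times d_e$ minors, hence has Lebesgue measure zero; a nondegenerate Gaussian measure assigns it probability zero.

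Combining the two steps: with probability one over $\vA$, the map $\vy \mapsto P_{\mathcal{T}}(\vA\vy)$ is surjective onto $\mathcal{T}$, so for the given $\vx$ we can pick any $\vy$ in the preimage of $\vx_{\top}$ to obtain $f(\vA\vy) = f(\vx)$. The only subtlety I foresee is being careful that the measure-zero ``bad'' event is chosen \emph{before} picking $\vx$ (i.e.\ the same matrix $\vA$ works simultaneously for all $\vx \in \mathbb{R}^D$), which is immediate here because surjectivity of $P_{\mathcal{T}}\vA$ is a property of $\vA$ alone and does not depend on $\vx$.
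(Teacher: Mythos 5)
Your proposal is correct and follows essentially the same route as the paper's proof: decompose $\vx$ along $\mathcal{T}$ and $\mathcal{T}^{\bot}$, reduce to surjectivity of $\vy \mapsto P_{\mathcal{T}}\vA\vy$, i.e.\ to $\vPhi^{T}\vA$ having rank $d_e$, and establish that almost surely via rotational invariance of the Gaussian plus the fact that rank-deficient matrices form a measure-zero polynomial zero set. The only cosmetic difference is that the paper certifies the rank by extracting a square $d_e \times d_e$ submatrix and using the determinant, whereas you use the sum of squared minors of the full $d_e \times d$ matrix; your explicit remark that the almost-sure event depends on $\vA$ alone (hence works uniformly over all $\vx$) is a point the paper leaves implicit.
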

\vspace{-5mm}
\begin{proof}
Please refer to the appendix.
\end{proof}

Theorem~\ref{prop:1} says that given any $\vx \in \mathbb{R}^D$ and a random matrix $\vA \in \mathbb{R}^{D\times d}$, with probability $1$, there is a point $\vy \in \mathbb{R}^d$ such that $f(\vx) = f(\vA\vy)$. 
This implies that for any optimizer $\vx^{\star} \in \mathbb{R}^D$, there is a point $\vy^{\star} \in \mathbb{R}^d$ with $f(\vx^\star) = f(\vA\vy^{\star})$. Therefore, instead of optimizing in the high dimensional space, we can optimize the function $g(\vy) = f(\vA\vy)$ in the lower dimensional space.
This observation gives rise to our new Random EMbedding Bayesian Optimization (REMBO) algorithm (see Algorithm \ref{alg:embed}). REMBO first draws a random embedding (given by $\vA$) and then performs Bayesian optimization in this embedded space.

\begin{algorithm}
\caption{REMBO: Bayesian Optimization with Random Embedding. Blue text denotes parts that are changed compared to standard Bayesian Optimization.}
\label{alg:embed}
\begin{algorithmic}[1]
{
\STATE \textcolor{blue}{Generate a random matrix $\vA \in \mathbb{R}^{D\times d}$}
\STATE \textcolor{blue}{Choose the bounded region set $\mathcal{Y} \subset \mathbb{R}^{d}$}
\STATE Initialize $\mathcal{D}_{0}$ as $\emptyset$.
\FOR{$t=1,2,\dots$}
  \STATE Find \textcolor{blue}{$\vy_{t+1} \in \mathbb{R}^d$} by optimizing the acquisition function $u$: \textcolor{blue}{$\vy_{t+1} = \argmax_{\vy\in \mathcal{Y}} u(\vy|\mathcal{D}_t).$} 
  \STATE Augment the data $\mathcal{D}_{t+1} = \mathcal{D}_{t} \cup \{ \textcolor{blue}{(\vy_{t+1}, f(\vA\vy_{t+1}))} \}$.
\STATE Update the kernel hyper-parameters.
\ENDFOR
}
\end{algorithmic}
\end{algorithm}

\begin{figure}[t]
\centering
  \includegraphics[scale=0.35]{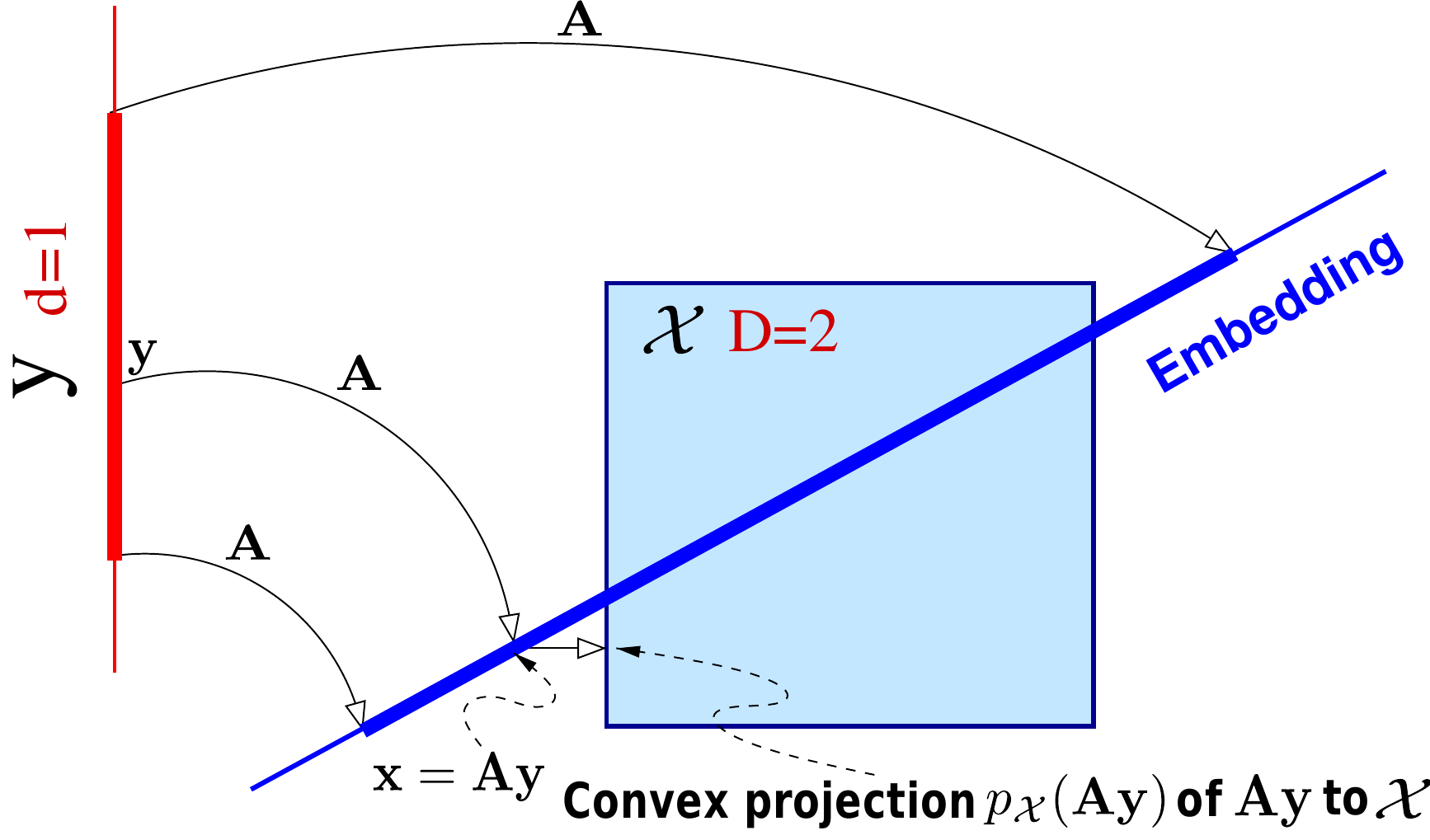}
  \caption{Embedding from $d=1$ into $D=2$. The box illustrates the 2D constrained space ${\cal X}$, while the thicker red line illustrates the 1D constrained space $\mathcal{Y}$. Note that if $\vA\vy$ is outside $\mathcal{X}$, it is projected onto $\mathcal{X}$. The set $\mathcal{Y}$ must be chosen large enough so that the projection of its image, $\vA \mathcal{Y}$, onto the effective subspace (vertical axis in this diagram) covers the vertical side of the box.}
  \label{fig:proj}
\end{figure}

In many practical optimization tasks, the goal is to optimize $f$ over a compact subset $\mathcal{X} \subset \mathbb{R}^{D}$ (typically a box), and $f$ can often not be evaluated outside of $\mathcal{X}$. Therefore, when REMBO selects a point $\vy$ such that $\vA\vy$ is outside the box $\mathcal{X}$, it projects $\vA\vy$ onto $\mathcal{X}$ before evaluating $f$. That is, 
$g(\vy) = f(p_{\mathcal{X}}(\vA\vy))$, where $p_{\mathcal{X}}:\mathbb{R}^D \rightarrow \mathbb{R}^D$ is the standard projection operator for our box-constraint: $p_{\mathcal{X}}(\vy) = {\arg \min}_{\vz\in \mathcal{X}} \|\vz-\vy\|_2$; see Figure~\ref{fig:proj}.
We still need to describe how REMBO chooses the bounded region $\mathcal{Y} \subset \mathbb{R}^{d}$, inside which it performs Bayesian optimization. This is important because REMBO's effectiveness depends on the size of $\mathcal{Y}$. Locating the optimum within $\mathcal{Y}$ is easier if $\mathcal{Y}$ is small, but if we set $\mathcal{Y}$ too small it may not actually contain the global optimizer.
In the following theorem, we show that we can choose $\mathcal{Y}$ in a way that only depends on the effective dimensionality $d_e$ such that the optimizer of the original problem is contained in the low dimensional space with constant probability.

\begin{theorem}
\label{prop:2}
Suppose we want to optimize a function $f: \mathbb{R}^{D} \rightarrow \mathbb{R}$ with effective dimension $d_e \leq d$ subject to the box constraint $\mathcal{X} \subset \mathbb{R}^D$, where $\mathcal{X}$ is centered around $\mathbf{0}$. 
Suppose further that the effective subspace $\cal T$ of $f$ is such that $\cal T$ is the span of $d_e$ basis vectors, and let $\vx^{\star}_\top \in \cal{T} \cap \mathcal{X}$ be an optimizer of $f$ inside $\mathcal{T}$. 
If $\vA$ is a $D\times d$ random matrix with independent standard Gaussian entries,
there exists an optimizer $\vy^\star \in \mathbb{R}^{d}$ such that $f(\vA\vy^\star) = f(\vx^\star_\top)$ and $\|\vy^\star\|_2 \leq \frac{\sqrt{d_e}}{\epsilon}\|\vx^{\star}_\top\|_2$ with probability at least $1-\epsilon$.
\end{theorem}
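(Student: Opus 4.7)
The plan is to reduce the claim to a right-inverse computation for a Gaussian submatrix, and then control the norm of the resulting solution via a classical Wishart quadratic-form identity. Since $\mathcal{T}$ is spanned by $d_e$ standard basis vectors, I would first relabel coordinates so that $\mathcal{T}=\operatorname{span}(\ve_1,\ldots,\ve_{d_e})$; this is an orthogonal change of basis and does not affect the distribution of $\vA$. Under this relabeling, $\vx^\star_\top = (\bar\vx,\vzero) \in \mathbb{R}^{d_e}\times\mathbb{R}^{D-d_e}$ with $\|\bar\vx\|_2=\|\vx^\star_\top\|_2$. Because $f$ is constant along $\mathcal{T}^\perp$ by Definition~\ref{def:effdim}, it suffices to find $\vy^\star$ such that the orthogonal projection of $\vA\vy^\star$ onto $\mathcal{T}$ equals $\vx^\star_\top$; in coordinates this amounts to solving the linear system $\vA_e\vy^\star=\bar\vx$, where $\vA_e\in\mathbb{R}^{d_e\times d}$ consists of the first $d_e$ rows of $\vA$.

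Next, the entries of $\vA_e$ are i.i.d.\ $\mathcal{N}(0,1)$ and $d\ge d_e$, so $\vA_e$ has full row rank almost surely. I would take the minimum-norm solution $\vy^\star=\vA_e^\top(\vA_e\vA_e^\top)^{-1}\bar\vx$, which gives $\|\vy^\star\|_2^2 = \bar\vx^\top(\vA_e\vA_e^\top)^{-1}\bar\vx$. Writing $\vz=\bar\vx/\|\bar\vx\|_2$ and setting $\vS=\vA_e\vA_e^\top\sim W_{d_e}(d,I)$, the target bound $\|\vy^\star\|_2\le(\sqrt{d_e}/\epsilon)\|\vx^\star_\top\|_2$ reduces to showing that $\vz^\top\vS^{-1}\vz\le d_e/\epsilon^2$ with probability at least $1-\epsilon$. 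Here I would invoke the classical identity that for any fixed unit vector $\vz$ and any Wishart $\vS\sim W_{d_e}(d,I)$, the scalar $1/(\vz^\top\vS^{-1}\vz)$ is distributed as $\chi^2_{d-d_e+1}$.

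Setting $X\sim\chi^2_{d-d_e+1}$, it remains to show $\Pr(X<\epsilon^2/d_e)\le\epsilon$. Since the density of $\chi^2_k$ at the origin is maximized at $k=1$, I would dominate by the worst-case $\Pr(X<t)\le\Pr(\chi^2_1<t)=2\Phi(\sqrt{t})-1\le\sqrt{2t/\pi}$, where the last step uses $\phi(0)=1/\sqrt{2\pi}$. Plugging in $t=\epsilon^2/d_e$ gives a failure probability of at most $\epsilon\sqrt{2/(\pi d_e)}\le\epsilon$, since $d_e\ge1>2/\pi$. The main obstacle I anticipate is stating the Wishart quadratic-form identity cleanly and verifying its hypotheses; the rest is a coordinate reduction plus a standard near-zero chi-square tail bound. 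A minor subtlety worth flagging is that the binding case is $d=d_e$, where only one degree of freedom remains — this is precisely why the theorem yields only a constant-probability guarantee rather than exponential concentration.
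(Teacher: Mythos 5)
Your proof is correct, but the key probabilistic step is genuinely different from the paper's. The paper reduces, as you do, to hitting $\bar{\vx}^\star_\top$ with the top $d_e$ rows of $\vA$, but it then restricts to a square $d_e\times d_e$ Gaussian submatrix $\vB$ (equivalently, it zeroes out all but $d_e$ components of $\vy^\star$) and invokes Theorem 3.4 of Sankar et al.\ (2003), $\mathbb{P}\left[\|\vB^{-1}\|_2\geq \sqrt{d_e}/\epsilon\right]\leq\epsilon$, to conclude via $\|\vy^\star\|_2\leq\|\vB^{-1}\|_2\,\|\bar{\vx}^\star_\top\|_2$. You instead keep the full underdetermined $d_e\times d$ system, take the minimum-norm solution, and control $\bar{\vx}^\top(\vA_e\vA_e^\top)^{-1}\bar{\vx}$ through the Wishart identity $1/(\vz^\top\vS^{-1}\vz)\sim\chi^2_{d-d_e+1}$ and the elementary bound $\Pr(\chi^2_1<t)\leq\sqrt{2t/\pi}$. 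Both routes are sound. Yours trades the external smallest-singular-value estimate for a classical distributional identity, bounds only the quadratic form in the one relevant direction rather than the full operator norm, and --- unlike the paper's argument, whose bound is insensitive to $d$ --- actually strengthens when $d>d_e$, since the extra degrees of freedom in $\chi^2_{d-d_e+1}$ suppress the lower tail; this nicely complements the paper's informal discussion in Section 3.1 of why larger $d$ helps. One cosmetic fix: justify $\Pr(\chi^2_k<t)\leq\Pr(\chi^2_1<t)$ by stochastic dominance ($\chi^2_k$ is $\chi^2_1$ plus an independent nonnegative variable) rather than by comparing densities at the origin, since the $\chi^2_1$ density is unbounded there.
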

\vspace{-6mm}
\begin{proof}
Please refer to the appendix.
\end{proof}

Theorem~\ref{prop:2} says that if the set $\mathcal{X}$ in the original space is a box constraint, then there exists an optimizer $\vx^\star_\top \in \mathcal{X}$ that is $d_e$-sparse such that with probability at least $1-\epsilon$, $\|\vy^\star\|_2 \leq \frac{\sqrt{d_e}}{\epsilon}\|\vx^{\star}_\top\|_2$ where $f(\vA\vy^\star) = f(\vx^\star_\top)$. If the box constraint is $\mathcal{X} = [-1,1]^D$ (which is always achievable through rescaling), we have with probability at least $1-\epsilon$ that 
\[ \|\vy^\star\|_2 \leq \frac{\sqrt{d_e}}{\epsilon}\|\vx^{\star}_\top\|_2 \leq \frac{\sqrt{d_e}}{\epsilon}\sqrt{d_e}. \]
Hence, to choose $\mathcal{Y}$, we must ensure that the ball of radius $d_e/\epsilon$, centred at the origin, lies inside $\mathcal{Y}$. 

In practice, we have found that it is very unlikely that the optimizer falls on the corner of the box constraint,
implying that $\|x_{\top}^{\star}\| < \sqrt{d_e}$. Thus setting $\calY$ too big may
be unnecessarily wasteful. To improve our understanding of this effect, we developed a simulation study, in which we drew 
random Gaussian matrices, used them to map various potential optimizers $\vx^{\star}_\top$ to their corresponding points $\vy^{\star}_\top \in \calY$,
and studied the norms of $\vy^{\star}_\top$.

Assume for simplicity of presentation that $\calY$ is axis-aligned and $d_e$-dimensional (the argument applies when $d > d_e$). 
The section of the random matrix $\vA$ that maps points in $\calY$ to $\calT$ is a 
random Gaussian matrix of dimension $d_e\times d_e$. Let us call this section of 
the matrix $\vB$.
Since random Gaussian matrices are rotationally invariant in distribution, we have for any orthonormal matrix 
$\vO$ and a random Gaussian matrix $\vB$, $\vO\vB \,{\buildrel d \over =}\, \vB$. That is, $\vO\vB$ and $\vB$ are equal in distribution.
Similarly, for $\vB^{-1}$, 
$\vO\vB^{-1} = \left(\vB \vO^T\right)^{-1} \,{\buildrel d \over =}\, \vB^{-1}$. 
Therefore, $\vB^{-1}$
is also rotationally invariant. Hence, 
$\|\vB^{-1}\vx_{\top}\|_{\infty} \,{\buildrel d \over =}\, \|\vB^{-1}\vx'_{\top}\|_{\infty}$
as long as $\|\vx_{\top}\|_2 = \|\vx'_{\top}\|_2$. Following this equivalence for the supremum norm of projected vectors, it suffices to choose a point 
with the largest norm in $[-1, 1]^{d_e}$ in our simulations. We chose $\vx_{\top} = [1, 1, \cdots, 1]$.
 
We conducted simulations for several embedding dimensions, 
$d_e \in \{1, 2, \cdots, 50\}$, by drawing $10000$ random Gaussian
matrices and computing $\|\vB^{-1}\vx\|_{\infty}$.
We found that with empirical probability above $1-\epsilon$ (for decreasing values of $\epsilon$), it was the case that
$$\|\vB^{-1}\vx\|_{\infty} < \frac{1}{\epsilon} \max\{\log(d_e), 1\}.$$
These simulations indicate that we could set $\mathcal{Y} = \left[-\frac{1}{\epsilon} \max\{\log(d_e), 1\}, 
\frac{1}{\epsilon} \max\{\log(d_e), 1\}\right]^{d_e}$. We did this in our experiments and in particular chose $\epsilon =\log(d)/\sqrt{d}$, so that $\calY$ was $[-\sqrt{d}, \sqrt{d}]^d$. Note that Theorem~\ref{prop:2} is not useful for this choice, which suggests that there is room to improve this aspect of our theory.

Some careful readers may wonder about the effect of the extrinsic dimensionality $D$. 
In the following theorem, we show that given the same intrinsic dimensions, 
the extrinsic dimensionality does not have an effect at all; in other words, REMBO is invariant to 
the addition of unimportant dimensions. 
\begin{theorem}[Invariance to addition of unimportant dimensions]
\label{prop:invar-D}
	Let $f: \mathbb{R}^{d_e} \rightarrow \mathbb{R}$ and for any $D \in \mathbb{N}$, $D \geq d_e$, define $f_{D}: \mathbb{R}^{D} \rightarrow \mathbb{R}$
	such that	$f_D$ adds $D-d_e$ truly unimportant dimensions to $f$: $f_{D}(\vz) = f(\vz_{1:d_e})$.
	Let $\vA_1 \in \mathbb{R}^{D_1\times d}$ and $\vA_0 \in \mathbb{R}^{(D_2-D_1)\times d}$ be random Gaussian matrices with $D_2 \geq D_1 \geq d$ and let
	$\vA_2 = 
	\begin{bmatrix}
	\vA_1\\
  \vA_0
	\end{bmatrix}.$
	Then, REMBO run using the same dimension $d \ge d_e$ and bounded region $\mathcal{Y}$
	yields exactly the same function values when run with $\vA_1$ on $f_{D_1}$ 
	as when run with $\vA_2$ on $f_{D_2}$.
	\end{theorem}
\begin{proof}
	We only need to show that for each $\vy \in \mathbb{R}^{d}$, we have $f_{D_1}( \vA_1 \vy ) = f_{D_2}( \vA_2 \vy )$
	since this step of REMBO (line 6 of Algorithm \ref{alg:embed}) is the only one that differs between the two algorithm runs. 
	When this function evaluation step yields the same results for every $\vy \in \mathbb{R}^{d}$, then
	the two REMBO runs behave identically since the algorithm is otherwise identical and deterministic after the selection of $\vA$ in Step 1.
	Since 
	$\vA_2 = 
	\begin{bmatrix}
	\vA_1\\
  \vA_0
	\end{bmatrix}$, we have 
  $\vA_2 \vy = 
	\begin{bmatrix}
	\vA_1 \vy\\
  \vA_0 \vy
	\end{bmatrix}$.
	Since $D_2 \ge D_1 \ge d_e$, the first $d_e$ entries of this $D_2 \times 1$ vector $\vA_2 \vy$ are the first $d_e$ entries of $\vA_1 \vy$.
	We thus have 
	$f_{D_1}( \vA_1 \vy ) = f([\vA_1 \vy]_{1:d_e}) = f([\vA_2 \vy]_{1:d_e}) = f_{D_2}( \vA_2 \vy )$.
\end{proof}

Finally, we show that REMBO is also invariant to rotations in the sense that
given different rotation matrices, running REMBO would result in the same distributions of observed function values. The argument is made concise in the following results.
\begin{lemma}
\label{lem:invar-rot}
	Consider function $f: \mathbb{R}^{D} \rightarrow \mathbb{R}$. 
	Let $f_{\vR}: \mathbb{R}^{D} \rightarrow \mathbb{R}$ be such that 
	$f_{\vR}(\vx) = f(\vR \vx)$ for some orthonormal matrix $\vR \in \mathbb{R}^{D \times D}$.
	Then, REMBO run in bounded region $\mathcal{Y}$
	yields exactly the same sequence of function values when run with $\vA$ on $f$ 
	as when run with $\vR^{-1} \vA$ on $f_{\vR}$ 
	for a matrix $\vA \in \mathbb{R}^{D\times d}$.
	\end{lemma}
\begin{proof}
	REMBO uses $f$ and $\vA$ (resp.\ $f_{\vR}$ and $\vR^{-1} \vA$) only in one spot (in line 6).
	Thus, the proof is trivial by showing that $f(\vA \vy_{t+1}) = f_{\vR}(\vR^{-1} \vA \vy_{t+1})$ through simple algebra:
	\[f_{\vR}(\vR^{-1} \vA \vy_{t+1}) = f(\vR \vR^{-1} \vA \vy_{t+1}) = f(\vA \vy_{t+1}).\]	
\end{proof}
\begin{theorem}[Invariance to rotations]
\label{prop:invar}
	Consider function $f: \mathbb{R}^{D} \rightarrow \mathbb{R}$. 
	Let $f_{\vR}: \mathbb{R}^{D} \rightarrow \mathbb{R}$ be such that 
	$f_{\vR}(\vx) = f(\vR \vx)$ for some orthonormal matrix $\vR \in \mathbb{R}^{D \times D}$.
	Then, given random Gaussian matrices $\vA_1 \in \mathbb{R}^{D\times d}$ 
	and $\vA_2 \in \mathbb{R}^{D\times d}$,
	REMBO run in bounded region $\mathcal{Y}$
	yields in distribution the same sequence of function values when run with $\vA_1$ on $f$ 
	as when run with $\vA_2$ on $f_{\vR}$.
\end{theorem}
\begin{proof}
	Since $\vR$ is orthonormal, we have $\vR^{-1} \vA_1 \overset{d}{=} \vA_2$.
	Therefore, REMBO run in bounded region $\mathcal{Y}$
	yields in distribution the same sequence of function values when run with $\vR^{-1} \vA_1$ on $f_{\vR}$ 
	as when run with $\vA_2$ on $f_{\vR}$.
	We have also by Lemma~\ref{lem:invar-rot} that REMBO run in bounded region $\mathcal{Y}$
	yields exactly the same sequence of function values when run with $\vA_1$ on $f$ 
	as when run with $\vR^{-1} \vA_1$ on $f_{\vR}$. The conclusion follows from combining the previous arguments.
\end{proof}

\subsection{Increasing the Success Rate of REMBO}\label{sec:increasing_rembo_success}

Theorem~\ref{prop:2} only guarantees that $\mathcal{Y}$ contains the optimum with probability at least $1-\epsilon$;
with probability $\delta \le \epsilon$ the optimizer lies outside of $\mathcal{Y}$. There are several ways to guard against this problem. One is to simply run REMBO multiple times with different independently drawn random embeddings. 
Since the probability of failure with each embedding is $\delta$, 
the probability of the optimizer not being included in the considered space of $k$ independently drawn embeddings is $\delta^k$. 
Thus, the failure probability vanishes exponentially quickly in the number of REMBO runs, $k$. 
Note also that these independent runs can be trivially parallelized to harness the power of
modern multi-core machines and large compute clusters.




Another way of increasing REMBO's success rate is to increase the dimensionality $d$ it uses internally. When $d > d_e$, with probability $1$ we have ${ d \choose d_e }$ different embeddings of dimensionality $d_e$. 
That is,
we only need to select $d_e$ columns of $\vA \in \mathbb{R}^{D \times d}$ to represent the $d_e$ relevant dimensions of $\vx$.
The algorithm can achieve this by setting the remaining $d-d_e$ sub-components of the $d$-dimensional vector $\vy$ to zero. 
Informally, since we have more embeddings, it is more likely that one of these will include the optimizer.
In our experiments, we will assess the merits and shortcomings of these two strategies. 


\subsection{Choice of Kernel}\label{sec:choice_of_kernel}

Since REMBO uses GP-based Bayesian optimization to search in the region $\mathcal{Y} \subset \mathbb{R}^{d}$, 
we need to define a kernel between two points $\vy^{(1)}, \vy^{(2)} \in {\cal Y}$. 
We begin with the standard definition of the squared exponential kernel:
\begin{mydefinition} 
Let $K_{SE}(\vy) = \exp(-\|\vy\|^2/2)$.
Given a length scale $\ell > 0$, we define the corresponding {squared exponential} kernel as 
\[ k_{\ell}^d(\vy^{(1)},\vy^{(2)}) = K_{SE}\left(\frac{\vy^{(1)}-\vy^{(2)}}{\ell}\right) \]
\label{def:sekernel}
\end{mydefinition}
\vspace{-3mm}
It is possible to work with two variants of this kernel. 
First, we can use $k_{\ell}^d(\vy^1, \vy^2)$ as in Definition~\ref{def:sekernel}. We refer to this kernel as the low-dimensional kernel. We can also adopt an implicitly defined high-dimensional kernel on 
$\mathcal{X}$:
$$k_{\ell}^D(\vy^{(1)}, \vy^{(2)}) = K_{SE}\left( \frac{p_{\mathcal{X}}(\vA\vy^{(1)}) - p_{\mathcal{X}}(\vA\vy^{(2)})}{\ell} \right),$$
where $p_{\mathcal{X}}:\mathbb{R}^D \rightarrow \mathbb{R}^D$ is the projection operator for our box-constraint as above (see Figure~\ref{fig:proj}).

Note that when using this high-dimensional kernel, we are fitting the GP in $D$ dimensions. However, the search space is no longer the box $\mathcal{X}$, but it is instead given by the much smaller subspace $\{p_{\mathcal{X}}(\vA\vy): \vy \in {\cal Y} \}$. Importantly, in practice it is easier to maximize the acquisition function in this subspace.   

Both kernel choices have strengths and weaknesses. 
The low-dimensional kernel has the benefit of only requiring the construction of a GP in the space of intrinsic dimensionality $d$, whereas the high-dimensional kernel requires the GP to be constructed in a space of extrinsic dimensionality $D$.
However, the low-dimensional kernel may waste time exploring in the region of the embedding outside of ${\cal X}$ (see Figure 2) because two points far apart in this region may be projected via $p_{\mathcal{X}}$ to nearby points on the boundary of ${\cal X}$. The high-dimensional kernel is not affected by this problem because the search is conducted directly on $\{p_{\mathcal{X}}(\vA\vy): \vy \in {\cal Y} \}$ with distances calculated in $\cal X$ and not in $\cal Y$.

The choice of kernel also depends on whether our variables are continuous, integer or categorical. The categorical case is important because we often encounter optimization problems that contain discrete choices. We define our kernel for categorical variables as:
$$k^D_{\lambda}(\vy^{(1)}, \vy^{(2)}) = \exp\left(-\frac{\lambda}{2} h(s(\vA\vy^{(1)}), s(\vA\vy^{(2)}))^2 \right),$$
where $\vy^{(1)}, \vy^{(2)} \in {\cal Y} \subset \mathbb{R}^d$,
the function $s$ maps continuous $d$-dimensional vectors to discrete $D$-dimensional vectors, and 
$h$ defines the distance between two discrete vectors. In more detail, $s(\vx)$ first uses $p_{\mathcal{X}}$ to  project $\vx$ to $\bar{\vx} \in [-1,1]^D$. For each dimension $\bar{x}_i$ of $\bar{\vx}$, $s$ then maps $\bar{x}_i$ to a discrete value by scaling and rounding. 
In our experiments, following~\citeA{Hutter:2009}, we defined $h(\vx^{(1)}, \vx^{(2)}) = |\{i: x^{(1)}_i \neq x^{(2)}_i\}|$ 
so as not to impose an artificial ordering between the values of categorical parameters. 
In essence, we measure the distance between two points in the low-dimensional space as the Hamming distance between their mappings in the high-dimensional space.



\subsection{Hyper-parameter Optimization}\label{sec:hyperparameter_opt}
\begin{algorithm}[t]
\caption{Bayesian Optimization with Hyper-parameter Optimization.}
\label{alg:bohyper}
\begin{algorithmic}[1]
{
\INPUT Threshold $t_{\sigma}$.  
\INPUT Upper and lower bounds $U > L > 0$ for hyper-parameter. 
\INPUT Initial length scale hyper-parameter $\ell \in [L, U]$.
\STATE Initialize $C = 0$

\FOR{$t=1,2,\dots$}
  \STATE Find $\vx_{t+1}$ by optimizing the acquisition function $u$: $\vx_{t+1} = \argmax_{\vx \in {\cal X}} u(\vx|\mathcal{D}_t).$ 
  \IF{$\sqrt{\sigma^2(\vx_{t+1})} < t_{\sigma} $}
    \STATE $C = C + 1$
  \ELSE
    \STATE $C = 0$
  \ENDIF
  \STATE Augment the data $\mathcal{D}_{t+1} = \{\mathcal{D}_{t}, (\vx_{t+1}, f(\vx_{t+1})) \}$

  \IF{$t \mod 20 = 0$ \OR $C=5$}
    \IF{$C=5$}
        \STATE $U = \max\{0.9 \ell, L\}$
        \STATE $C = 0$
    \ENDIF
    \STATE Learn the hyper-parameter by optimizing the log marginal likelihood by using DIRECT and CMA-ES: $\ell=\argmax_{l \in [L, U]} \log p (\vf_{1:t+1}|\vx_{1:t+1}, l)$
  \ENDIF
\ENDFOR
}
\end{algorithmic}
\end{algorithm}

For Bayesian optimization (and therefore REMBO), it is difficult to manually estimate the true length scale hyper-parameter of a problem at hand. 
 To avoid any manual steps and to achieve robust performance across diverse sets of objective functions, in this paper we adopted an adaptive hyper-parameter optimization scheme. 
The length scale of GPs is often set by maximizing marginal likelihood~\cite{Rasmussen:2006,Jones:1998}. However, as demonstrated by \citeA{Bull:2011}, this approach, when implemented naively, may not guarantee convergence. This is not only true of approaches that maximize the marginal likelihood, but also of approaches that rely on Monte Carlo sampling from the posterior distribution \cite{Brochu:2010,Snoek:2012} when the number of data is very small, unless the prior is very informative.

Here, we propose to optimize the length scale parameter $\ell$ by maximizing the marginal likelihood subject to an upper bound $U$ which is decreased when the algorithm starts exploiting too much. 
Full details are given in Algorithm~\ref{alg:bohyper}.
We say that the algorithm is exploiting when the standard deviation at the maximizer of the acquisition function $\sqrt{\sigma(\vx_{t+1})}$ is less than some threshold $t_{\sigma}$ for $5$ consecutive iterations. 
Intuitively, this means that the algorithm did not emphasize exploration (searching in new parts of the space, where the predictive uncertainty is high) for $5$ consecutive iterations. When this criterion is met, the algorithm decreases its upper bound $U$ multiplicatively and re-optimizes the hyper-parameter subject to the new bound.
Even when the criterion is not met the hyper-parameter is re-optimized every $20$ iterations.
For each optimization of the acquisition function, the algorithm runs both DIRECT~\cite{Jones:1993} and CMA-ES~\cite{Hansen:2001:CDS:1108839.1108843} 
and uses the result of the best of the two options.
The astute reader may wonder about the difficulty of optimizing the 
acquisition functions. For REMBO, however, we have not found the optimization of the acquisition function
to be a problem since we only need to optimize it in the low-dimensional space
and our acquisition function evaluations are cheap, allowing us tens of
thousands of evaluations in seconds that (empirically) suffice to cover the 
low-dimensional space well.

The motivation of this algorithm is to rather err on the side of having too small a length scale:
given a squared exponential kernel $k_\ell$, with a smaller length scale than another kernel $k$, one can show that any function $f$ in the RKHS characterized by $k$ is also an element of the RKHS characterized by $k_\ell$.
Thus, when running expected improvement, one can safely use $k_\ell$ instead of $k$ as the kernel of the GP and still preserve convergence~\cite{Bull:2011}. We argue that (with a small enough lower bound $L$) the algorithm would eventually reduce the upper bound enough to allow convergence.
Also, the algorithm would not explore indefinitely as $L$ is required to be positive.
In our experiments, we set the initial constraint $[L, U]$ to be $[0.01, 50]$ and set $t_{\sigma} = 0.002$. 

We want to stress the fact that the above argument
is only known to hold for a class of kernels over continuous domains (e.g.\ squared exponential and Mat\'ern class kernels). 
Although we believe that a similar argument could be made for
integer and categorical kernels, 
rigorous arguments concerning convergence under these kernels remain a challenge
in Bayesian optimization.


\section{Experiments}\label{sec:experiments}

We now study REMBO empirically. We first use synthetic functions of small intrinsic dimensionality $d_e=2$ but 
extrinsic dimension $D$ up to $1$ billion to demonstrate REMBO's independence of $D$.
Then, we apply REMBO to automatically optimize the 47 parameters of a widely-used mixed integer linear programming solver and demonstrate that it achieves state-of-the-art performance.
However, we also warn against the blind application of REMBO. To illustrate this, we study REMBO's performance for tuning the 14 parameters of a random forest body part classifier used by Kinect. In this application, all the $D=14$ parameters appear to be important, and while REMBO (based on $d=3$) finds reasonable solutions (better than random search and comparable to what domain experts achieve), standard Bayesian optimization can outperform REMBO (and the domain experts) in such moderate-dimensional spaces. More optimistically, this random forest tuning application shows that REMBO does not fail catastrophically when it is not clear that the optimization problem has low effective dimensionality.

\subsection{Experimental Setup}\label{sec:experimental-setup}

For all our experiments, we used a single robust version of REMBO that automatically sets its GP's length scale parameter as described in 
Section~\ref{sec:hyperparameter_opt}. 
The code for REMBO, as well as all data used in our experiments is publicly available at \url{https://github.com/ziyuw/rembo}.


Some of our experiments required substantial computational resources, with the computational 
expense of each experiment depending mostly on the cost of evaluating the respective black-box function. 
While the synthetic experiments in Section \ref{exp:billion} only required minutes for each run of each method, optimizing the mixed integer programming solver in Section \ref{exp:mip_opt} required 4-5 hours per run, and optimizing the random forest classifier in Section \ref{exp:rf_configuration} required 4-5 days per run. In total, we used over half a year of CPU time for the experiments in this paper. 
In the first two experiments, we study the effect of our two methods for increasing REMBO's success rate (see Section \ref{sec:increasing_rembo_success}) by running different numbers of independent REMBO runs with different settings of its internal dimensionality $d$.



%

\subsection{Bayesian Optimization in a Billion Dimensions}\label{exp:billion}


\begin{figure*}[t!]
\centering
  \includegraphics[scale=0.5]{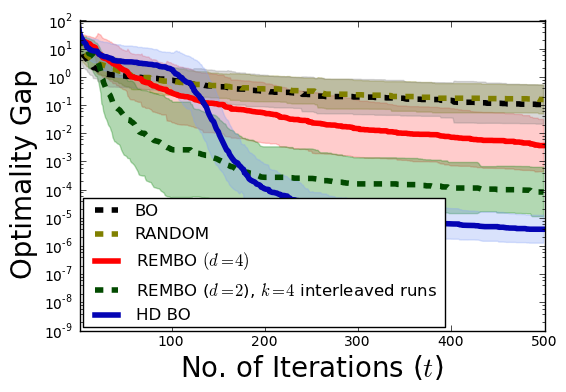}
  \includegraphics[scale=0.5]{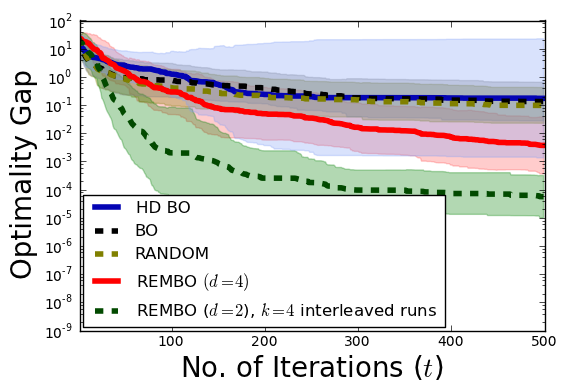}
  \includegraphics[scale=0.5]{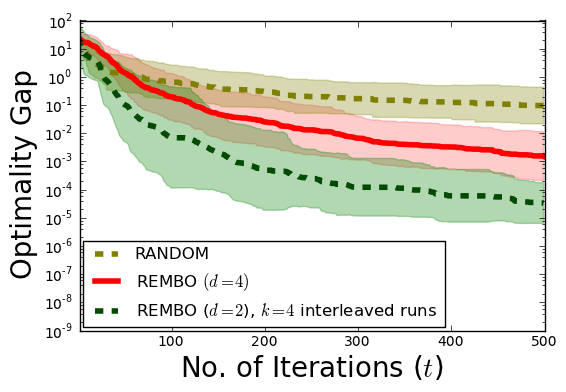}
  \caption{Comparison of random search (RANDOM), Bayesian optimization (BO),
  method by~\protect\citeA{Chen:2012} (HD BO), and REMBO.
Left: $D=25$ extrinsic dimensions; Right: $D=25$, with a rotated objective function; Bottom: $D=10^9$ extrinsic dimensions. We plot means and $1/4$ standard deviation confidence intervals of the optimality gap across 50 trials.}
  \label{fig:standard}
\end{figure*}

The experiments in this section employ a standard $d_e=2$-dimensional benchmark function for Bayesian optimization, embedded in a $D$-dimensional space. That is, we add $D-2$ additional dimensions which do not affect the function at all.
More precisely, the function whose optimum we seek is $f(\vx_{1:D}) = g(x_i,x_j)$, where $g$ is the Branin function
\begin{eqnarray*}
g(x_1, x_2) 
= (x_2- \frac{5.1}{4\pi^2}x_1^2 + \frac{5}{\pi}x_1 - 6)^2 + 
10(1-\frac{1}{8\pi})\cos(x_1) + 10
\end{eqnarray*}
and where $i$ and $j$ are selected once using a random permutation.
To measure the performance of each optimization method, we used the \emph{optimality gap}: the difference of the best function value it found and the optimal function value.   

\begin{table}[h]
\small
\begin{center}
\begin{tabular}{ l |  c  c  c}
\hline
$k$ & $d=2$ & $d=4$ & $d=6$ \\
\hline
  10& 0.0022 $\pm$ 0.0035 & 0.1553 $\pm$ 0.1601 & 0.4865 $\pm$  0.4769 \\
  5 & 0.0004 $\pm$ 0.0011 & 0.0908 $\pm$ 0.1252 & 0.2586 $\pm$ 0.3702 \\
  4 & 0.0001 $\pm$ 0.0003 & 0.0654 $\pm$ 0.0877 & 0.3379 $\pm$ 0.3170 \\
  2 & 0.1514 $\pm$ 0.9154 & 0.0309 $\pm$ 0.0687 & 0.1643 $\pm$ 0.1877 \\
  1 & 0.7406 $\pm$ 1.8996 & 0.0143 $\pm$ 0.0406 & 0.1137 $\pm$ 0.1202 \\
\end{tabular}
\end{center}
\caption{Optimality gap for $d_e=2$-dimensional Branin function embedded in $D=25$ dimensions, for
REMBO variants using a total of $500$ function evaluations. The variants differed in the internal dimensionality $d$ and in the number of interleaved runs $k$ (each such run was only allowed $500/k$ function evaluations).
We show mean and standard deviations of the optimality gap achieved after 500 function evaluations. 
\label{tab:rembo_variants_results}}
\end{table}

We evaluate REMBO using a fixed budget of $500$ function
evaluations that is spread across multiple interleaved runs ---
for example, when using $k = 4$ interleaved REMBO runs,
each of them was only allowed $125$ function evaluations. 
We study the choices of $k$ and $d$ by considering several combinations of these values.
The results in Table \ref{tab:rembo_variants_results} demonstrate that interleaved runs helped improve REMBO's performance.
We note that in 13/50 REMBO runs, the global optimum was indeed not contained in the box $\mathcal{Y}$ REMBO searched with $d=2$; this is the reason for the poor mean performance of REMBO with $d=2$ and $k=1$.
However, the remaining $37$ runs performed very well, and REMBO thus performed well when using multiple interleaved runs: with a failure rate of 13/50=0.26 per independent run, the failure rate using $k=4$ interleaved runs is only $0.26^4\approx 0.005$. One could easily achieve an arbitrarily small failure rate by using many independent parallel runs. 
Using a larger $d$ is also effective in increasing the probability of the optimizer falling into REMBO's box $\mathcal{Y}$ but at the same time slows down REMBO's convergence (such that interleaving several short runs loses its effectiveness). 
 
Next, we compared REMBO to standard Bayesian optimization (BO) and to random search, for an extrinsic dimensionality of $D=25$. 
Standard BO is well known to perform well in low dimensions, but to degrade above a tipping point of about 15-20 dimensions. Our results for $D=25$ (see Figure \ref{fig:standard}, left) confirm that BO performed rather poorly just above this critical dimensionality (merely tying with random search). 
REMBO, on the other hand, still performed very well in 25 dimensions.

One important advantage of REMBO is that --- in contrast to the approach of \citeA{Chen:2012} --- it does not require the effective dimension to be coordinate aligned. To demonstrate this fact empirically, we rotated the embedded Branin function by an orthogonal rotation matrix $\vR \in \mathbb{R}^{D\times D}$. That is, we replaced $f(\vx)$ by $f(\vR \vx)$.
Figure~\ref{fig:standard} (middle) shows that REMBO's performance is not affected by this rotation. 

Finally, since REMBO is independent of the extrinsic dimensionality $D$ as long as the intrinsic dimensionality $d_e$ is small, it performed just as well in $D=1\,000\,000\,000$ dimensions (see Figure \ref{fig:standard}, right). 
To the best of our knowledge, the only other existing method that can be run in such high dimensionality is random search.

For reference, we also evaluated the method of \citeA{Chen:2012} for these functions, confirming that it does not handle rotation gracefully: 
while it performed best in the non-rotated case for $D=25$, it performed worst in the rotated case. It could not be used efficiently for more than $D=1,000$.
Based on a Mann-Whitney U test with Bonferroni multiple-test correction, all performance differences were statistically significant, except Random vs.\ standard BO.
Finally, comparing REMBO to the method of \citeA{Chen:2012}, we also note that REMBO is much simpler to implement and that its results are very reliable (with interleaved runs).

\subsection{Synthetic Discrete Experiment}
In this section, we test the high-dimensional kernel with a synthetic experiment. Specifically, we again optimize the Branin function, but restrict its domain to $225$ discrete points on a regular grid.
As above, we added $23$ additional irrelevant dimensions to make the problem 25-dimensional in total. 

We used a small fixed budget of $100$ function evaluations for all algorithms involved as the problem would require no more than $225$ evaluations to be solved completely. We used $k=4$ interleaved runs for REMBO.
We again compare REMBO to random search and standard BO. For REMBO, we use the high-dimensional kernel to handle the discrete nature of the problem.
The result of the comparison is summarized in Figure~\ref{fig:disbran}.
Standard BO again suffered from the high extrinsic dimensionality
and performed slightly worse than random search. REMBO, on the other hand, performed well in this setting.
\begin{figure*}[t!]
\centering
  \includegraphics[scale=0.5]{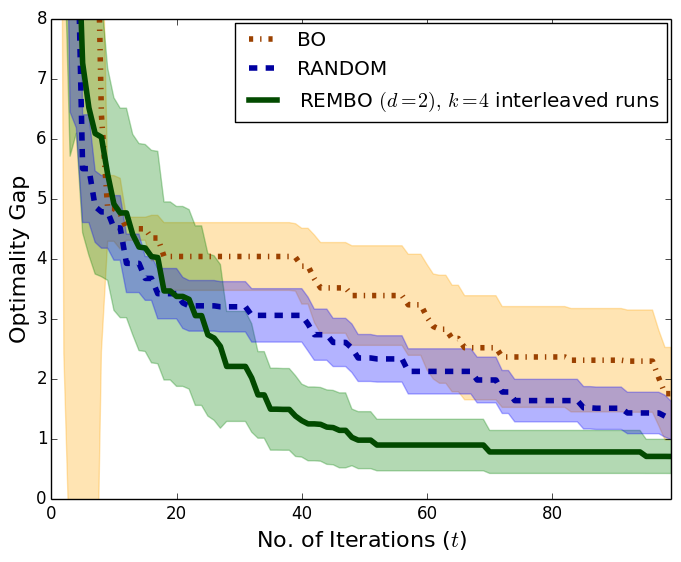}
  \caption{Comparison of random search (RANDOM), Bayesian optimization (BO), and REMBO. $D=25$ extrinsic dimensions. We plot means and $1/4$ standard deviation confidence intervals of the optimality gap across 50 trials.}
  \label{fig:disbran}
\end{figure*}

\subsection{Automatic Configuration of a Mixed Integer Linear Programming Solver}\label{exp:mip_opt}
State-of-the-art algorithms for solving hard computational problems tend to parameterize several design choices in order to allow a customization of the algorithm to new problem domains. Automated methods for algorithm configuration have recently demonstrated that substantial performance gains of state-of-the-art algorithms can be achieved in a fully automated fashion~\cite{Mockus:1999,ParamILS-JAIR,Hutter:2010,ValEtAl11,Bergstra:2011,Wang:2011}. 
These successes have led to a paradigm shift in algorithm development towards the 
active design of highly parameterized frameworks that can be automatically customized to particular problem domains using optimization~\cite{Hoos:2012:PO:2076450.2076469,Bergstra:model_search,Thornton:2013}.
The resulting algorithm configuration problems have been shown to have low dimensionality~\cite{Hutter:2014}, and here, we demonstrate that REMBO can exploit this low dimensionality even in the discrete spaces typically encountered in algorithm configuration. 
We use a configuration problem obtained from \citeA{Hutter:2010}, aiming to configure the 40 binary and 7 categorical parameters of \text{lpsolve}~\cite{lpsolve}
, a popular mixed integer programming (MIP) solver that has been downloaded over 40\,000 times in the last year.
The objective is to minimize the optimality gap \texttt{lpsolve} can obtain in a time limit of five seconds for a MIP encoding of a wildlife corridor problem from computational sustainability~\cite{ghs08:connection}.
Algorithm configuration usually aims to improve performance for a representative set of problem instances, and effective methods need to solve two orthogonal problems: searching the parameter space effectively and deciding how many instances to use in each evaluation (to trade off computational overhead and over-fitting). Our contribution is for the first of these problems; to focus on how effectively the different methods search the parameter space, we only consider configuration on a single problem instance.

Due to the discrete nature of this optimization problem, we could only apply REMBO using the high-dimensional kernel for categorical variables $k^D_{\lambda}(\vy^{(1)}, \vy^{(2)})$ described in Section \ref{sec:choice_of_kernel}. While we have not proven any theoretical guarantees for discrete optimization problems, REMBO appears to effectively exploit the low effective dimensionality of at least this particular optimization problem.

\begin{figure}[tb]
\begin{center}
  \includegraphics[scale=0.41]{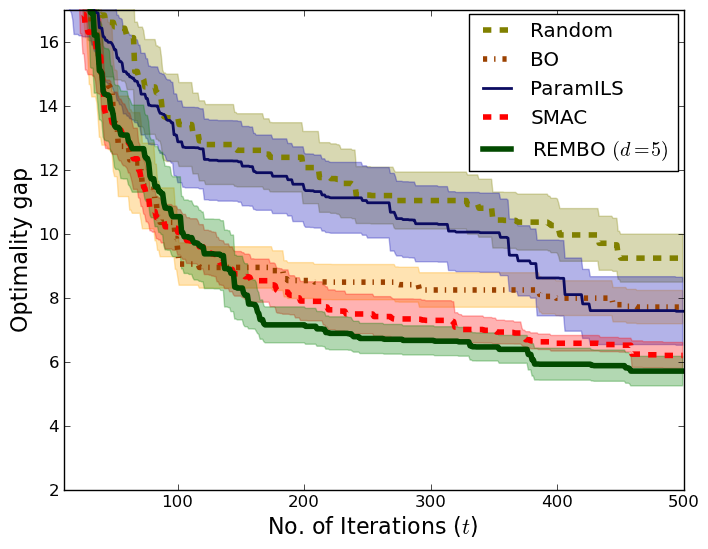}
  \includegraphics[scale=0.41]{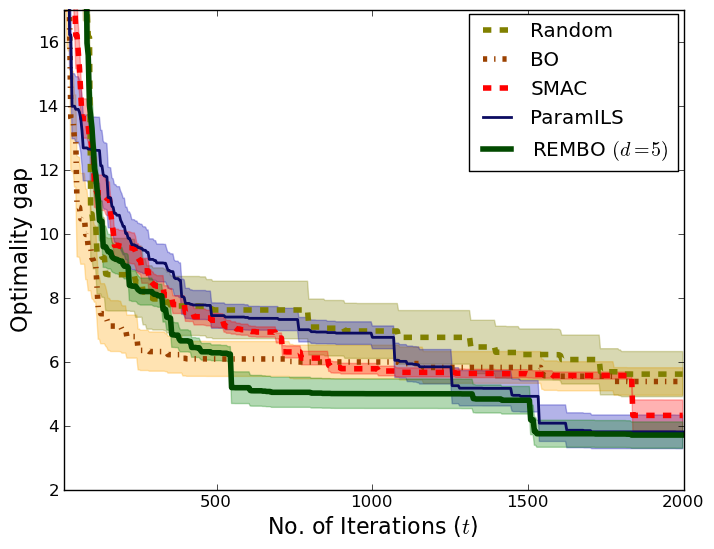}
  \caption{Performance of various methods for configuration of \texttt{lpsolve}; we show the optimality gap \texttt{lpsolve} achieved with the configurations found by the various methods (lower is better). Left: a single run of each method; Right: performance with $k=4$ interleaved runs.\label{fig:lpsolve}}
\end{center}
\end{figure}

Figure \ref{fig:lpsolve} (left) compares BO, REMBO, and the baseline random search against 
ParamILS \cite{ParamILS-JAIR} 
and SMAC \cite{Hutter:2011}. 
ParamILS and SMAC were specifically designed for the configuration of algorithms with many discrete parameters and define the current state of the art for this problem. Nevertheless, here SMAC and our vanilla REMBO method performed best. Based on a Mann-Whitney U test with Bonferroni multiple-test correction, they both yielded statistically significantly better results than both Random and standard BO; no other performance differences were significant.
The figure only shows REMBO with $d=5$ to avoid clutter, but we did not optimize this parameter; the only other value we tried ($d=3$) resulted in indistinguishable . 

As in the synthetic experiment, REMBO's performance could be further improved by using multiple interleaved runs.
However, as shown by~\citeA{HutHooLey12-ParallelAC}, multiple independent runs can also improve the performance of SMAC and especially ParamILS. 
Thus, to be fair, we re-evaluated all approaches using interleaved runs. Figure \ref{fig:lpsolve} (right) shows that ParamILS and REMBO benefitted most from interleaving $k=4$ runs. However, the statistical test results did not change, still showing that SMAC and REMBO outperformed Random and BO, with no other significant performance differences.

\subsection{Automatic Configuration of Random Forest Kinect Body Part Classifier}\label{exp:rf_configuration}
We now evaluate REMBO's performance for optimizing the 14 parameters of a random forest body part classifier. This classifier closely follows the proprietary system used in the Microsoft Kinect \cite{Shotton:2011} and is available at \url{https://github.com/david-matheson/rftk}.

We begin by describing some details of the dataset and classifier in order to build intuition for the objective function and the parameters being optimized. The data we used consists of pairs of depth images and ground truth body part labels. Specifically, we used 1\,500 pairs of 320x240 resolution depth and body part images, each of which was synthesized from a random pose of the CMU mocap dataset. Depth, ground truth body parts and predicted body parts (as predicted by the classifier described below) are visualized for one pose in Figure \ref{fig:rf-all}~(left). There are 19 body parts plus one background class. 
For each of these 20 possible labels, the training data contained 25\,000 pixels, randomly selected from 500 training images. Both validation and test data contained \emph{all} pixels in the 500 validation and test images, respectively.

\begin{figure}[t]
\centering
  \raisebox{0.6cm}{\includegraphics[height=0.2\textheight]{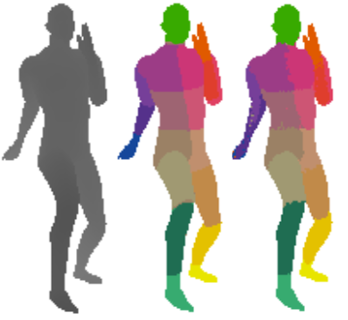}} 
  ~~~~~~~~~~
  \raisebox{0.6cm}{\includegraphics[height=0.2\textheight]{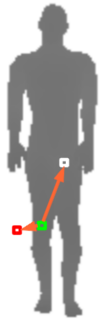}} 
  \caption{Left: ground truth depth, ground truth body parts and predicted body parts; Right: features specified by offsets u and v.}
  \label{fig:rf-all}
\end{figure}

The random forest classifier is applied to one pixel $P$ at a time. At each node of each of its decision trees, it computes the depth difference between two pixels described by offsets from $P$ and compares this to a threshold. 
At training time, many possible pairs of offsets are generated at random, and the pair yielding highest information gain for the training data points is selected. Figure \ref{fig:rf-all}~(right) visualizes a potential feature for the pixel in the green box: it computes the depth difference between the pixels in the red box and the white box, specified by respective offsets u and v.
At training time, u and v are drawn from two independent 2-dimensional Gaussian distributions, each of which is parameterized by its two mean parameters $\mu_1$ and $\mu_2$ and three covariance terms $\Sigma_{11}$, $\Sigma_{12}$, and $\Sigma_{22}$ ($\Sigma_{21}=\Sigma_{12}$ because of symmetry). 
These constitute 10 of the parameters that need to be optimized, with range [-50,50] for the mean components and [1, 200] for the covariance terms. Low covariance terms yield local features, while high terms yield global features. 
Next to these ten parameters, the random forest classifier has four other standard parameters, outlined in Table \ref{table:rf}. 
It is well known in computer vision that many of the parameters described here are important. Much research has been devoted to identifying their best values, but results are dataset specific, without definitive general answers.

\begin{table}[t]
\caption{Parameter ranges for random forest classifier. For the purpose of optimization, the maximum tree depth and the number of potential offsets were transformed to log space.}
\label{table:rf}
\begin{center}
{\footnotesize
\begin{tabular}{ll}
\multicolumn{1}{c}{\bf Parameter}  &\multicolumn{1}{c}{\bf Range}
\\ \hline \\
Max. tree depth         &[1 60] \\
Min. No. samples for non leaf nodes             & [1 100]  \\
No. potential offsets to evaluate           &[1 5000] \\
Bootstrap for per tree sampling           &[T F] \\
\end{tabular}
}
\end{center}
\end{table}


The objective in optimizing these RF classifier parameters is to find a parameter setting that learns the 
best classifier in a given time budget of five minutes. To enable competitive performance in this short amount of time, at each node of the tree only a random subset of data points is considered.
Also note that the above parameters do not include the number of trees $T$ in the random forest; since performance improves monotonically in $T$, we created as many trees as possible in the time budget. Trees are constructed depth first and returned in their current state when the time budget is exceeded. 
Using a fixed budget results in a subtle optimization problem because of the complex interactions between the various parameters (maximum depth, number of potential offsets, number of trees and accuracy). 

It is unclear a priori whether a low-dimensional subspace of these 14 interacting parameters exists that captures the classification accuracy of the resulting random forests. 
We performed large-scale computational experiments with REMBO, random search, and standard Bayesian optimization (BO) to study this question. In this experiment, we used the high-dimensional kernel for REMBO to avoid the potential over-exploration problems of the low-dimensional kernel described in Section~\ref{sec:choice_of_kernel}. We believed that $D=14$ dimensions would be small enough to avoid inefficiencies in fitting the GP in $D$ dimensions. This belief was confirmed by the observation that standard BO (which operates in $D=14$ dimensions) performed well for this problem. 

\begin{figure}[t]
\centering
   \includegraphics[scale=0.41]{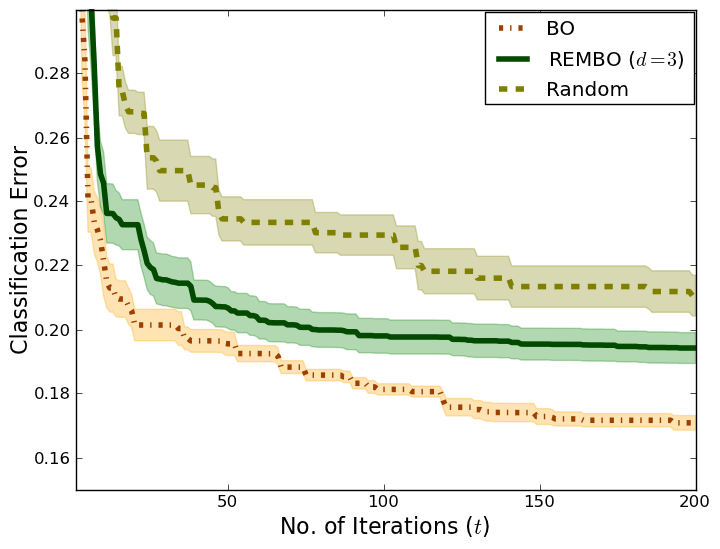}
   \includegraphics[scale=0.41]{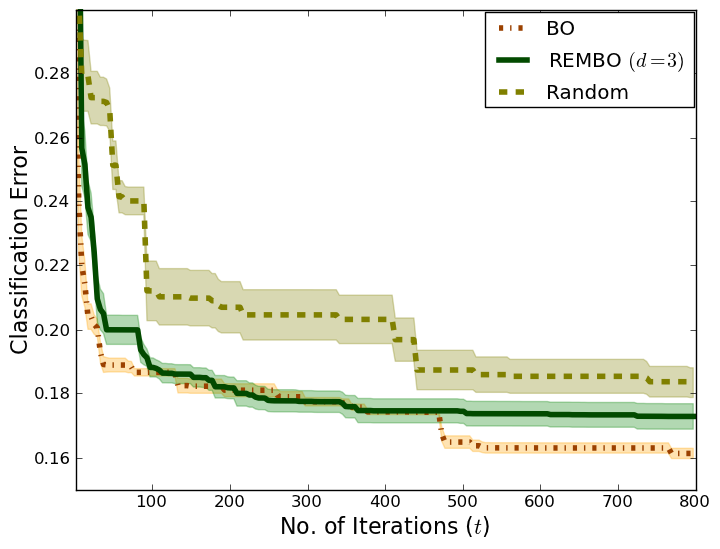}
  \caption{Performance of various methods for optimizing RF parameters for body part classification. For all methods, we show RF accuracy (mean $\pm$ 1/4 standard deviation across 10 runs) for all 2.2 million non background pixels in the 500-pose validation set, using the RF parameters identified by the method. The results on the test set were within 1\% of the results on the validation set.  Left: performance with a single run of each method; Right: performance with $k=4$ interleaved runs.}
  \label{fig:rf-results}
\end{figure}

Figure \ref{fig:rf-results} (left) shows the results that can be obtained by a single run of random search, BO, and REMBO. Remarkably, REMBO clearly outperformed random search, even based on as few as $d=3$ dimensions.\footnote{Due to the large computational expense of this experiment (in total over half a year of CPU time), we only performed conclusive experiments with $d=3$; preliminary runs of REMBO with $d=4$ performed somewhat worse than those with $d=3$ for a budget of 200 function evaluations, but were still improving at that point.}
However, since the extrinsic dimensionality was ``only'' a moderate $D=14$, standard Bayesian optimization performed well, and since it was not limited to a low-dimensional subspace it outperformed REMBO. 
Nevertheless, several REMBO runs actually performed very well, comparably with the best runs of BO. Consequently, when running $k=4$ interleaved runs of each method, REMBO performed almost as well as BO, matching its performance up to about 450 function evaluations (see Figure \ref{fig:rf-results}, right). 

We conclude that the parameter space of this RF classifier does not appear to have a clear low effective dimensionality;
since the extrinsic dimensionality is only moderate, this leads REMBO to perform somewhat worse than standard Bayesian optimization, but it is still possible to achieve reasonable performance based on as little as $d=3$ dimensions.

\section{Conclusion} \label{sec:conclusion}
We have demonstrated that it is possible to use random embeddings in Bayesian optimization to optimize functions 
of extremely high extrinsic dimensionality $D$ provided that they have low intrinsic dimensionality $d_e$. 
Moreover, our resulting REMBO algorithm is coordinate independent 
and it only requires a simple modification of the original Bayesian optimization algorithm; namely multiplication by a random matrix. 
We proved REMBO's independence of $D$ theoretically and empirically validated it by optimizing low-dimensional functions embedded in previously untenable extrinsic dimensionalities of up to $1$ billion. We also theoretically and empirically showed REMBO's rotational invariance.
Finally, we demonstrated that REMBO achieves state-of-the-art performance for optimizing the 47 discrete parameters of a popular mixed integer programming solver,
thereby providing further evidence for the observation (already put forward by Bergstra, Hutter and colleagues) that, for many problems of great practical interest, the number of important dimensions indeed appears to be much lower than their extrinsic dimensionality. 

We note that the central idea of our work -- using an otherwise unmodified optimization procedure in a randomly embedded space -- in principle could be applied to arbitrary optimization procedures. Evaluating the effciency of this technique for other procedures is an interesting topic for future work.


\section*{Acknowledgements}
We thank Christof Sch\"{o}tz for proofreading a draft of this article.

\appendix
\renewcommand{\baselinestretch}{1}



{


\bibliography{embedding}
\bibliographystyle{theapa}
}




\section{Proof of Theorem~\ref{prop:1}}

\begin{proof}
Since $f$ has effective dimensionality $d_e$, there exists an effective subspace ${\cal T} \subset \mathbb{R}^D$, such that rank$({\cal T}) = d_e$. Furthermore, any $\vx \in \mathbb{R}^D$ decomposes as
 $\vx = \vx_{\top} + \vx_{\bot}$, where $\vx_{\top} \in {\cal T}$ and $\vx_{\bot} \in {\cal T}^{\bot}$. Hence, $f(\vx) = f(\vx_{\top} + \vx_{\bot}) = f(\vx_{\top}).$ Therefore, without loss of generality, it will suffice to show that for all $\vx_{\top} \in {\cal T}$, there exists a $\vy\in \mathbb{R}^d$ such that $f(\vx_{\top}) = f(\vA\vy)$.

Let $\vPhi \in \mathbb{R}^{D \times d_e}$ be a matrix, whose columns form an orthonormal basis for ${\cal T}$. Hence, for each $\vx_{\top} \in {\cal T}$, there exists a $\vc \in \mathbb{R}^{d_e}$ such that $\vx_{\top} = \vPhi \vc$. Let us for now assume that $\vPhi^{T}\vA$ has rank $d_e$. If $\vPhi^{T}\vA$ has rank $d_e$, there exists a $\vy$ such that $(\vPhi^{T}\vA)\vy=\vc$. The orthogonal projection of $\vA\vy$ onto ${\cal T}$ is given by 
\[ \vPhi \vPhi^T \vA \vy = \vPhi \vc = \vx_{\top}. \]
Thus $\vA\vy = \vx_{\top} + \vx'$ for some $\vx' \in {\cal T}^{\bot}$ since $\vx_{\top}$ is the projection $\vA\vy$ onto ${\cal T}$.
Consequently, $f(\vA\vy) = f(\vx_{\top} + \vx') = f(\vx_{\top})$. 

It remains to show that, with probability one, the matrix $\vPhi^T \vA$ has rank $d_e$.
Let $\vA_{e} \in \mathbb{R}^{D\times d_e}$ be a submatrix of $\vA$ consisting of any $d_e$ columns of $\vA$, which are \emph{i.i.d.} samples distributed according to $\mathcal{N}(\mathbf{0}, \vI)$. Then, $\vPhi^T \va_{i}$ are \emph{i.i.d.} samples from $\mathcal{N}(\mathbf{0}, \vPhi^T \vPhi) = \mathcal{N}(\mathbf{0}_{d_e}, \vI_{d_e\times d_e})$, and so we have $\vPhi^T\vA_e$, when considered as an element of $\mathbb{R}^{d_e^2}$, is a sample from $\mathcal{N}(\mathbf{0}_{d_e^2}, \vI_{d_e^2\times d_e^2})$. On the other hand, the set of singular matrices in $\mathbb{R}^{d_e^2}$ has Lebesgue measure zero, since it is the zero set of a polynomial (i.e. the determinant function) and polynomial functions are Lebesgue measurable. Moreover, the Normal distribution is absolutely continuous with respect to the Lebesgue measure, so our matrix $\vPhi^T \vA_e$ is almost surely non-singular, which means that it has rank $d_e$ and so the same is true of $\vPhi^T \vA$, whose columns contain the columns of $\vPhi^T \vA_e$.
\end{proof}

\section{Proof of Theorem~\ref{prop:2}}

\begin{proof}
Since $\mathcal{X}$ is a box constraint, by projecting $\vx^\star$ to $\cal T$ we get $\vx^\star_\top \in \mathcal{T} \cap \mathcal{X}$. Also, since $\vx^\star =  \vx^\star_\top + \vx_\bot$ for some $\vx_\bot \in \cal T^{\bot}$, we have $f(\vx^\star) = f(\vx^\star_\top)$. Hence, $\vx^\star_\top$ is an optimizer.
By using the same argument as appeared in Proposition 1, it is easy to see that with probability $1$ $\forall \vx \in \cal T$ $\exists \vy \in \mathbb{R}^d$ such that $\vA\vy = \vx + \vx_\bot$ where $\vx_\bot \in \cal T^{\bot}$. Let $\vPhi$ be the matrix whose columns form a standard basis for $\cal T$. Without loss of generality, we can assume that 
\[ \vPhi = \begin{bmatrix}\vI_{d_e} \\
\mathbf{0} \end{bmatrix} \]
Then, as shown in Proposition~\ref{prop:1}, there exists a $\vy^\star \in \mathbb{R}^d$ such that $\vPhi \vPhi^{T}\vA \vy^\star = \vx^\star_\top$. Note that for each column of $\vA$, we have
\[ \vPhi \vPhi^{T}\va_i \sim \mathcal{N}\left(\mathbf{0}, \begin{bmatrix}
\vI_{d_e} & {\bf 0}\\
{\bf 0} & {\bf 0}\end{bmatrix}\right). \]
Therefore $\vPhi \vPhi^{T}\vA\vy^\star = \vx^\star_\top$ is equivalent to $\vB\vy^\star = \bar{\vx}^\star_\top$ where $\vB\in \mathbb{R}^{d_e \times d_e}$ is a random matrix with independent standard Gaussian entries and $\bar{\vx}^\star_\top$ is the vector that contains the first $d_e$ entries of $\vx^\star_\top$ (the rest are $0$'s). By Theorem 3.4 of \cite{Sankar:2003}, we have
\[ \mathbb{P}\left[\|\vB^{-1}\|_2 \geq \frac{\sqrt{d_e}}{\epsilon}\right]  \leq \epsilon. \]

Thus, with probability at least $1-\epsilon$, $\|\vy^\star\| \leq \|\vB^{-1}\|_2  \|\bar{\vx}^\star_\top\|_2 = \|\vB^{-1}\|_2  \|\vx^\star_\top\|_2 \leq \frac{\sqrt{d_e}}{\epsilon} \|\vx^\star_\top\|_2$. 
\end{proof}

\section{Regret Bounds}


In this section, we provide regret results for REMBO in the special case that (1)~the embedded subspace has the same dimension as the effective dimension and (2)~the embedded subspace contains a maximum of the function $f$ inside the box $\mathcal{X}$. More specifically, here we will analyze a simplified version of the algorithm that performs Bayesian optimization only inside the box $\mathcal{X}$ rather than considering its extension beyond $\mathcal{X}$ and projecting onto the boundary of $\mathcal{X}$ as done in our actual implementation. 

We acknowledge that this mismatch between the theoretical results and our actual algorithm is rather unsatisfactory. However, some of the obstacles that stand in the way of a complete analysis of the algorithm are currently insurmountable, since they would require the development of new tools that are far beyond the scope of this paper. We point these out at the end of this section and hope that our partial result will motivate the development of such tools, which might not otherwise receive any attention from the community.

We begin our mathematical treatment with the definitions of \emph{simple regret} and the \emph{skew squared exponential (SSE)} kernel.
\begin{mydefinition}\label{def:regret} Given a function $f: {\mathcal{X}} \to \mathbb{R}$ and a sequence of points $\{\vx_t\}_{t=1}^\infty \subseteq \mathcal{X}$, the \emph{simple regret} {\bf with respect to the set} $\mathcal{X}$ at time $T$ is defined to be $r_f(T) =  \sup_{\mathcal{X}}f - \displaystyle\max_{t=1}^T f(\vx_t)$.
\end{mydefinition}
\begin{mydefinition}\label{def:ssekernel}
Given a symmetric, positive-definite matrix $\mathbf{\vDelta}$ 
and $K_{SE}$, we define the corresponding \emph{skew squared exponential} kernel as
$$ k_{\vDelta}(\vy^{(1)},\vy^{(2)}) = 
K_{SE}\left(\mathbf{\vDelta}^{-1/2} (\vy^{(1)}-\vy^{(2)})\right). $$
\end{mydefinition}
Given $\vDelta$,
and $\mathcal{X} \subseteq \mathbb R^d$, we denote the Reproducing Kernel Hilbert Spaces (RKHSs) corresponding to $k_{\vDelta}$ by $\calH_{k_{\vDelta}}$ \cite[Definition 4.18]{Steinwart:2008}.
Note that for the setting $\vDelta = \ell^2 I$ we recover the squared exponential kernel $k_{\ell}^d$ introduced earlier.
To simplify notation, we introduce yet another definition.
\begin{mydefinition}\label{def:canrep}
Consider the function $f: \calS \rightarrow \mathbb{R}$, where $\calS \subset \mathbb{R}^D$
is a $d$-dimensional subspace of $\mathbb{R}^D$. 
Let $\vPhi \in \mathbb{R}^{D \times d}$ be a matrix 
whose columns form an orthonormal basis for ${\cal S}$.
We define the \emph{canonical representation} $\overline{f}: \mathbb{R}^d \rightarrow \mathbb{R}$ 
of $f$ as $\overline{f}(\vx) = f(\vPhi \vx)$.
\end{mydefinition}

Our main result below shows that the simple regret of a variation of REMBO vanishes with rate $\mathcal O(t^{-\frac{1}{d}})$ with high probability. This REMBO variant uses a fixed kernel parameter length scale $\ell$ and, more importantly, restricts its search to its embedding inside the box $\mathcal{X}$.
 
We only make the assumption that the cost function restricted to $\mathcal T$ is governed by a skew squared exponential kernel, a much weaker assumption than the standard assumption that the cost function is governed by an axis aligned kernel in $D$ dimensions \cite<see, e.g.,>{Bull:2011}. Despite the fact that the cost function restricted to $\mathcal T$ is governed by a skew squared exponential kernel, the result shows that we can control the regret using the low-dimensional squared exponential kernel.

\begin{theorem}\label{thm:regret}
Let $\mathcal{X} \subset \mathbb{R}^D$ be a compact subset with non-empty interior that is convex and contains the origin and $f:\mathbb{R}^D \to \mathbb{R}$, a function with effective dimension $d$. 
Suppose that the canonical representation of the restriction of $f$ to its effective subspace $\mathcal T$, denoted $\overline{f|_\mathcal T}$, is an element of the RKHS $\calH_{k_{\vDelta}}(\mathbb R^d)$ with $\vDelta$ symmetric and positive definite and also satisfying $0 < r^2 < \lambda_{\min}(\vDelta) \leq \lambda_{\max}(\vDelta) < R^2$ for constants $r$ and $R$, where $\lambda_{\min}(\vDelta)$ and $\lambda_{\max}(\vDelta)$ are the extreme eigenvalues of $\vDelta$.

Let $\vA$ be a $D \times d$ matrix, whose elements are drawn from the normal distribution $\mathcal N\left(0,1\right)$. 
Then, given any $\epsilon > 0$, we can choose a length-scale $\ell = \ell(\epsilon)$ such that running REMBO with kernel $k^d_{\ell}$ on the restriction of $f$ to the image of $\vA$ inside $\mathcal{X}$ has simple regret {\bf with respect to the set} $\textup{Im } \vA \cap \mathcal{X}$ in $\mathcal O(t^{-\frac{1}{d}})$ with probability $1-\epsilon$.
\end{theorem}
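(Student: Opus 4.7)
The strategy is to reduce the $D$-dimensional REMBO instance to an equivalent $d$-dimensional Bayesian optimization problem on a compact set, argue that the induced objective lies in a squared exponential RKHS whose hyperparameters depend only on $d$, $r$, $R$ and $\epsilon$, and then invoke the Expected Improvement convergence result of \citet{Bull:2011}.

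First, I would collapse the ambient dimension. Let $\vU \in \mathbb{R}^{D \times d}$ be a matrix whose columns form an orthonormal basis of the effective subspace $\mathcal T$. By Definition~\ref{def:effdim} we may write $f(\vx) = h(\vU^{\top}\vx)$, where $h := f|_{\mathcal T} \in \mathcal H_{\mathbf{\Lambda}}(\mathbb{R}^d)$. Hence the function actually seen by the algorithm factors as
\[ g(\vy) \;=\; f(\vA\vy) \;=\; h(\vB\vy), \qquad \vB := \vU^{\top}\vA \in \mathbb{R}^{d\times d}. \]
Rotation invariance of the Gaussian distribution implies that $\vB$ has i.i.d.\ $\frac{1}{\sqrt d}\mathcal N(0,1)$ entries; in particular $\vB$ is almost surely invertible.

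Second, I would identify the RKHS to which $g$ belongs and control its parameters through $\vB$. A change of variables inside the kernel yields
\[ k_{\mathbf{\Lambda}}(\vB\vy^{(1)},\vB\vy^{(2)}) \;=\; k_{\mathbf{\Lambda}'}(\vy^{(1)},\vy^{(2)}), \qquad \mathbf{\Lambda}' \;:=\; \vB^{-1}\mathbf{\Lambda}\vB^{-\top}, \]
and this change of variables is an isometry of RKHSs, so $g \in \mathcal H_{\mathbf{\Lambda}'}(\mathbb{R}^d)$ with $\|g\|_{\mathcal H_{\mathbf{\Lambda}'}} = \|h\|_{\mathcal H_{\mathbf{\Lambda}}}$. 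Non-asymptotic Davidson--Szarek bounds for the extremal singular values of square Gaussian matrices give constants $c(\epsilon), C(\epsilon) > 0$ such that $\sigma_{\min}(\vB) \geq c(\epsilon)$ and $\sigma_{\max}(\vB) \leq C(\epsilon)$ with probability at least $1-\epsilon$. On this event $\lambda_{\min}(\mathbf{\Lambda}') \geq r^2/C(\epsilon)^2$ and $\lambda_{\max}(\mathbf{\Lambda}') \leq R^2/c(\epsilon)^2$, bounds that depend only on $d$, $\epsilon$, $r$, $R$.

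Third, I would bridge from the skew RKHS $\mathcal H_{\mathbf{\Lambda}'}$ to the isotropic squared exponential RKHS $\mathcal H_\ell$ actually used by the algorithm. Choosing $\ell = \ell(\epsilon)$ small enough that $2\ell^{2}\vI \preceq \mathbf{\Lambda}'$ in the Loewner order (achievable on the high-probability event above), a comparison of the spectral densities of $k_\ell$ and $k_{\mathbf{\Lambda}'}$ shows $\|g\|_{\mathcal H_\ell} \leq \|g\|_{\mathcal H_{\mathbf{\Lambda}'}}$, so $g$ lies in $\mathcal H_\ell(\mathbb{R}^d)$ with a norm bound depending only on $d$, $\epsilon$, $r$, $R$. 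The domain of the reduced problem, $\mathcal Y \cap \{\vy : \vA\vy \in \mathcal X\}$, is compact, convex and has non-empty interior because $\mathcal X$ does and $\mathbf 0 \in \mathcal X$; therefore the EI regret result of \citet{Bull:2011} applies and delivers simple regret $\mathcal{O}(t^{-1/d})$ for $g$, which by construction equals the simple regret of the REMBO iterates for $f$.

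\textbf{Main obstacle.} I expect the delicate step to be the kernel comparison in the third paragraph: arguing that the isotropic squared exponential kernel $k_\ell$ still correctly captures the non-isotropic target $g$ and that both $\ell(\epsilon)$ and the resulting norm bound depend only on $d$, $r$, $R$ and $\epsilon$, never on $D$. Once this RKHS embedding and the Gaussian spectral concentration for $\vB$ are in place, the appeal to Bull's EI regret bound is essentially routine.
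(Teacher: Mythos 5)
Your proposal follows essentially the same route as the paper's proof: reduce to the $d\times d$ Gaussian matrix $\vB = \vU^\top\vA$ (the paper's $\mathbf{\Pi}\vA$), identify the induced kernel as a skew squared exponential with matrix $\vB^{-1}\mathbf{\Lambda}\vB^{-\top}$, control its spectrum via extreme-singular-value concentration for square Gaussian matrices, embed into an isotropic $\mathcal H_{\ell}$ for a small enough $\ell(\epsilon)$, and invoke Bull's Expected Improvement bound. The only quibble is that the embedding $\mathcal H_{\mathbf{\Lambda}'} \hookrightarrow \mathcal H_{\ell}$ is not norm-decreasing as written --- the paper's Proposition 4.46 of Steinwart gives operator norm $\left(U/\ell\right)^{d/2} \geq 1$ --- but since that constant depends only on $d$, $r$, $R$ and $\epsilon$, your conclusion is unaffected.
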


This theorem does not follow directly from the results of \citeA{Bull:2011}, since the kernel is not aligned with the axes, both in the high-dimensional space and the lower dimensional embedding. 

Please refer to Appendix \ref{sec:regret_proof} for the proof of this theorem. 
The general idea of the proof is as follows. If we have a squared exponential kernel $k_\ell$, with a smaller length scale than a given kernel $k_{\vDelta}$, then an element $f$ of the RKHS of $k_{\vDelta}$
is also an element of the RKHS of $k_\ell$ (see Lemma~\ref{lem:bull4} in the Appendix for more details). 
So, when running expected improvement, one can safely use $k_\ell$ instead of $k_{\vDelta}$ as the kernel and still obtain a regret bound. Most of the proof is dedicated to finding a length scale $\ell$ that fits ``underneath'' our kernel, so we can replace our kernel with $k_\ell$, to which we can apply the results of \citeA{Bull:2011}.

Note that in the above theorem we make the assumption that the embedded dimension and the effective dimension are equal to each other. Given bounds such as Proposition 1 of \citeA{deFreitas:2012}, we strongly believe that a similar result holds when the embedded dimension is higher than the effective dimension; however, the analysis of that setting remains elusive due to the fact that none of the methods available in the literature on regret bounds for Bayesian optimization algorithms can handle kernels that have flat dimensions (i.e. when $\vDelta$ is not positive-definite), and adapting them to such a case requires tools from statistics that have yet to be developed. Given that, theoretical bounds for the case with $d_e < d$ are outside the scope of this work.

Moreover, note that this theorem provides a sublinear regret result for REMBO with respect to the whole set $\mathcal{X}$ only in the situation that $\textup{Im }\vA$ intersects the maximum locus of the function $f$ \emph{inside} the set $\mathcal{X}$. Note that Theorem \ref{prop:2} provides a lower bound on the probability of this happening in the special case that the effective subspace is axis-aligned.
%
Proving regret bounds for situations in which the image of $\vA$ only contains a maximum outside of $\mathcal{X}$ would require dealing with the RKHS of non-stationary kernels, since the projection operator $p_{\mathcal{X}}$ can have non-constant Jacobian. This is related to the situation with treed GPs~\cite{Gramacy:2004}, with the additional, immensely complicating ingredient that is the continuity assumption imposed along the boundaries of the various partitions of the space (since $p_{\mathcal{X}}$ is continuous). Similar to the theory of Partial Differential Equations, where boundary conditions are the hardest part of the problem, we anticipate this modification to be a non-trivial, albeit very interesting, undertaking, and pose it as an open problem to the community.


\begin{remark}
The above theorem would also hold for a class of stationary kernels which includes 
the popular Mat\'ern kernel. 
For conciseness of presentation, we do not include this result, but refer 
the curious reader to~\citeA{Bull:2011} for more details.
\end{remark}

\section{Proof of Theorem~\ref{thm:regret}}\label{sec:regret_proof}
Before embarking on the proof of Theorem \ref{thm:regret}, 
we introduce some definitions and state a few preliminary results,
which we quote from \citeA{Bull:2011} to facilitate the reading of this exposition.

We denote the Fourier transform of any function 
$\phi(\vx)$ as $\widehat{\phi}(\vxi) = \int_{\mathbb{R}^d}{e^{-2\pi i \vx^T \vxi}\phi(\vx)d\vx}$.
In this section we consider kernels of the form
$$
k_{\vDelta}(\vx^{(1)}, \vx^{(2)}) =  
K_{\vDelta}(\vx^{(1)}-\vx^{(2)}) = 
K(\vDelta^{-\frac{1}{2}} (\vx^{(1)}-\vx^{(2)}))
$$
where $\vDelta$ is a positive definite matrix and $K$ has Fourier transform 
$\widehat{K}$
such that $\widehat{K}$ is isotropic and radially non-increasing.
Notice that both the squared exponential kernel and the skew squared exponential
kernel introduced in Definition \ref{def:ssekernel} of the main text are represented in the form above.
The popular kernels from the Mat\'ern class can also be represented in this form. 
In general, the results in this section would follow for any kernel
that satisfies the four assumptions detailed by~\citeA{Bull:2011}.
\begin{lemma}[Lemma 1 of \citeA{Bull:2011}]
 \label{lem:bull1}
 $\calH(\mathbb{R}^d)$ is the space of real continuous functions 
 $f \in L^2(\mathbb{R}^d)$ whose norm
 $$
 \|f\|^2_{\calH(\mathbb{R}^d)} := 
 \int{\frac{ | \widehat{f}(\vxi) |^2 }{\widehat{K}(\vxi)}d\vxi}
 $$
 is finite, taking $0/0=0$.
\end{lemma}

\begin{lemma}[Lemma 2 of \citeA{Bull:2011}]
 \label{lem:bull2}
 Given a set $\calS \subseteq \mathbb{R}^d$,
 $\calH(\calS)$ is the space of functions $f = g|\calS$ ($f$ is $g$ restricted to $\calS$) 
 for some $g \in \calH(\mathbb{R}^d)$, with norm
 $$
 \|f\|_{\calH(\calS)} := \inf_{g|\calS = f} \|g\|_{\calH(\mathbb{R}^d)},
 $$
 and there is a unique $g$ minimizing this expression.
\end{lemma}

\begin{lemma}[Lemma 4 of \citeA{Bull:2011}, extended to our setting]
 \label{lem:bull4}
 Let $\calS \subseteq \mathbb{R}^d$
 and $\vDelta$ and $\vDelta'$ be two symmetric positive definite matrices.
 Let $\lambda_{\max}( \vDelta') $, $ \lambda_{\min}(\vDelta)$ 
 be the largest and the smallest eigenvalues of $\vDelta'$, $\vDelta$ respectively
 such that $\lambda_{\max}( \vDelta') \leq \lambda_{\min}(\vDelta)$.
 Then $f \in \calH_{k_{\vDelta}}(\calS)$  implies 
 $f \in \calH_{k_{\vDelta'}}(\calS)$ 
 and also
 $$\|f\|_{\calH_{k_{\vDelta'}}(\calS)} 
 \leq \left(\frac{| \vDelta | }{ | \vDelta'|}\right)^{\frac{1}{2}} 
 \|f\|_{\calH_{k_{\vDelta}}(\calS)}$$
 where $|\vDelta|$ is the determinant of $\vDelta$.
\end{lemma}
\begin{proof}
 Since $\vDelta$ and $\vDelta'$ are positive definite we can write 
 $\vDelta = Q^T \Sigma Q$ and $\vDelta' = Q'^T \Sigma' Q'$ 
 where $Q$ and $Q'$ are orthonormal.
 Let $C = \left(\frac{| \vDelta' | }{ | \vDelta|}\right)^{\frac{1}{2}}$. 
 Since $\lambda_{\max}( \vDelta') \leq \lambda_{\min}(\vDelta)$, we know that 
 $$
 \|\Sigma'^{\frac{1}{2}} Q' \vxi \|^2
 =    \vxi^T \vDelta' \vxi 
 \leq \vxi^T \vDelta \vxi
 =    \|\Sigma^{\frac{1}{2}} Q \vxi \|^2.
 $$
 As $\widehat{K}$ is isotropic and radially non-increasing,
 we have that 
 $$
 \widehat{K_{\vDelta'}}(\vxi)
 =    |\vDelta'|^{\frac{1}{2}} \widehat{K}( \Sigma'^{\frac{1}{2}}Q'\vxi) 
 \geq |\vDelta'|^{\frac{1}{2}} \widehat{K}( \Sigma^{\frac{1}{2}}Q\vxi)
 =    |\vDelta'|^{\frac{1}{2}} |\vDelta|^{-\frac{1}{2}} \widehat{K_{\vDelta}}(\vxi)
 =    C \widehat{K_{\vDelta}}(\vxi)
 $$
 where the first and the second last equality follows from the following property of Fourier transforms: $\widehat{h}(\vxi) = \frac{1}{|M|}\widehat{f}(M^{-T}\vxi)$ 
 if $h(\vx) = f(M\vx)$ given a non-singular matrix $M$.
 Given $f \in \calH_{k_{\vDelta}}(\calS)$, let $g \in \calH_{k_{\vDelta}}(\mathbb{R}^d)$
 be its minimum norm extension, as in Lemma~\ref{lem:bull2}. 
 By the definition of RKHS norm in Lemma~\ref{lem:bull1}, 
 $$
 \|f\|^2_{\calH_{k_{\vDelta'}}(\calS)} 
 = \|g\|^2_{\calH_{k_{\vDelta'}}(\mathbb{R}^d)}
 =    \int{\frac{ | \widehat{g} |^2 }{\widehat{K_{\vDelta'}}}}
 \leq \int{\frac{ | \widehat{g} |^2 }{C \widehat{K_{\vDelta}}}}
 =    C^{-1}\|f\|^2_{\calH_{k_{\vDelta}}(\calS)}.
 $$
 Since $C$ is finite, by Lemma~\ref{lem:bull1} we have that
 $f \in \calH_{k_{\vDelta}}(\calS)$  implies $f \in \calH_{k_{\vDelta'}}(\calS)$.
\end{proof}

\begin{figure}[t!]
\centering
  \includegraphics[width=0.8\textwidth]{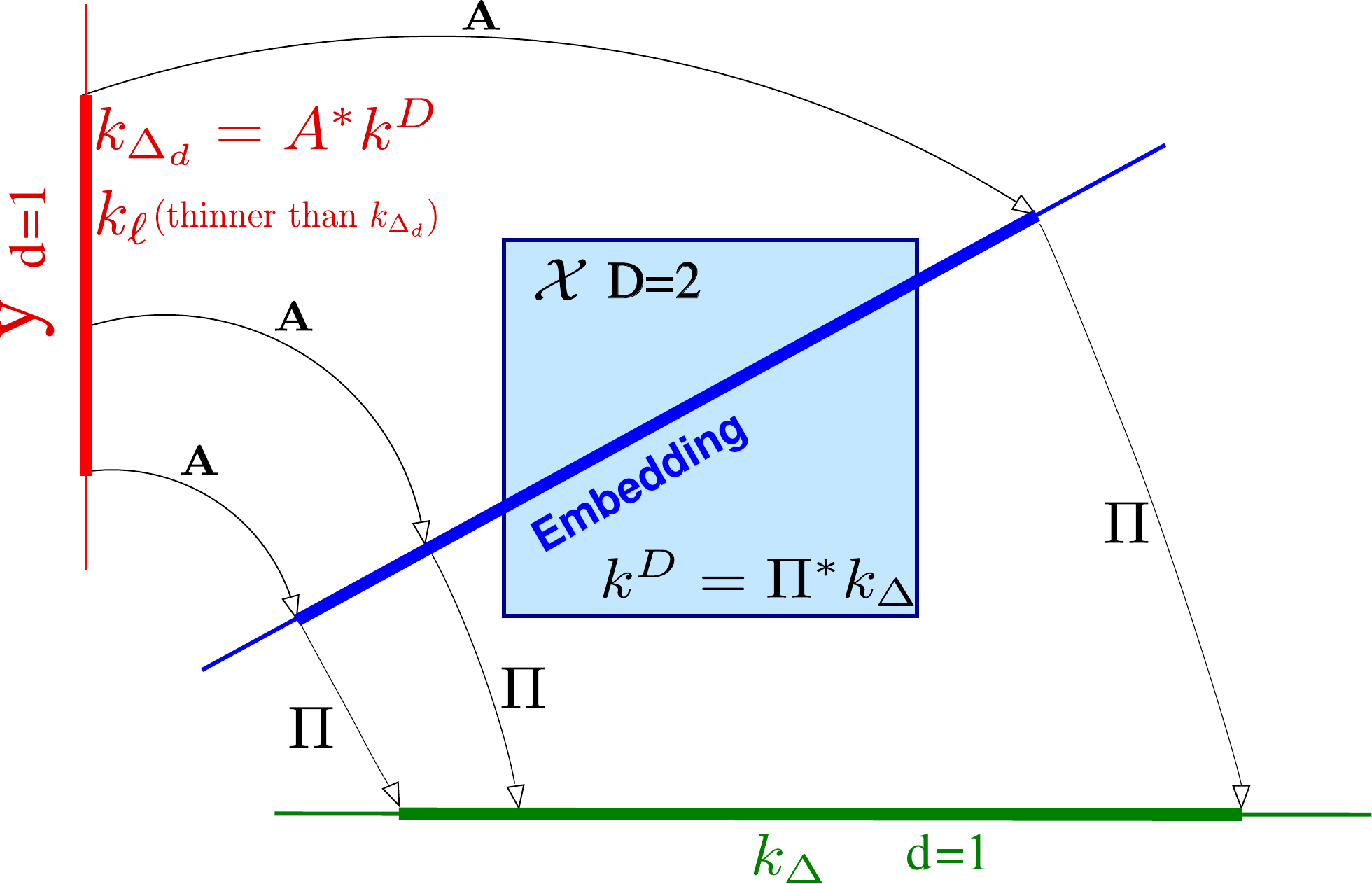}
  \caption{A illustration of the different kernels defined in this proof. 
  In this figure, 
  the red line represents the one-dimensional space $\calY$ over which we optimize.
  The blue region is the original high-dimensional space. 
  The green line represents the effective subspace $\calT$ projected to a 
  one-dimensional space through $\vPi$. The true kernel $K_{\vDelta}$ is defined on 
  the green region. The true kernel on the high-dimensional space is then naturally
  $k^D = \vPi^*k_{\vDelta}$ and the true kernel on $\calY$ is
  $K_{\vDelta_d} = \vA^*k^D$. To optimize without the knowledge of $K_{\vDelta_d}$,
  we use a kernel $k_{\ell}$ which with high probability is thinner than 
  $K_{\vDelta_d}$ in all directions thus preserving convergence properties.}
  \label{fig:kernels}
\end{figure}

\begin{mydefinition}\label{pullback} Given a map $\pi: \mathcal{S} \to \mathcal{T}$ between any two sets $\mathcal{S}$ and $\mathcal{T}$, and any map $f: \underbrace{\mathcal{T} \times \cdots \times \mathcal{T}}_\text{$n$-times} \to \mathbb R$, with $n \geq 1$, we define the \emph{pull-back} of $f$ under $\pi$ as follows:
\[ \pi^* f(s_1, \ldots, s_n) := f\left(\pi (s_1), \ldots, \pi (s_n) \right). \]
That is, one evaluates the pull-back $\pi^* f$ on points in $\mathcal{S}$ by first ``pushing them forward'' onto $\mathcal{T}$ and then using $f$ to get a number.

If the map $\pi$ is given by a matrix $\vA$, we will use the notation $\vA^* f$ for the pull-back of $f$ under the linear map induced by $\vA$. Moreover, given a matrix $\vA$ and a set $\mathcal{S}$ in its target space, we will denote by $\vA^{-1}(\mathcal{S})$ the set of all points that are mapped into $\mathcal{S}$ by $\vA$.
\end{mydefinition}

\begin{proposition}[Theorem 2 by \citeA{Bull:2011}, paraphrased for our particular setting]\label{thm:Bull} Given a squared exponential kernel $k_\ell$ on a compact subset $\mathcal Y \subset \mathbb R^d$ and a function $f \in \mathcal H_\ell(\mathcal Y)$, then applying Expected Improvement to $f$ results in simple regret that diminishes according to $\mathcal O\big(t^{-\frac{1}{d}}\big)$, with the constants worsening as the norm $\|f\|_{\mathcal H_\ell(\mathcal Y)}$ increases.
\end{proposition}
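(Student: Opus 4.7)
The plan is to follow the analysis of \citet{Bull:2011} for pure Expected Improvement (EI) on a fixed, noise-free GP. I would begin by writing EI in closed form as $\textup{EI}_t(\vy) = \sigma_t(\vy)\,\tau(u_t(\vy))$ with $u_t(\vy) = (\mu_t(\vy) - f^+_t)/\sigma_t(\vy)$, $\tau(u) = u\Phi(u) + \phi(u)$, and $f^+_t = \max_{i \leq t} f(\vy_i)$. The first technical input is the RKHS pointwise bound $|f(\vy) - \mu_t(\vy)| \leq \|f\|_{\mathcal H_\ell(\mathcal Y)}\,\sigma_t(\vy)$ for $f \in \mathcal H_\ell(\mathcal Y)$, which combined with the elementary bounds $\phi(0) \leq \tau(u) \leq \max\{u,0\} + \phi(0)$ yields a two-sided sandwich of the form
\begin{equation*}
\max\{0, f(\vy) - f^+_t\} - B\sigma_t(\vy) \,\leq\, \textup{EI}_t(\vy) \,\leq\, \max\{0, f(\vy) - f^+_t\} + B\sigma_t(\vy),
\end{equation*}
where $B = B(\|f\|_{\mathcal H_\ell(\mathcal Y)})$ is explicit and grows linearly in the norm. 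This sandwich is the engine that ties the EI score (which the algorithm sees) to the simple regret (which it does not).

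The second step converts the sandwich into a per-step regret inequality. Let $\vy^\star$ be a global maximizer of $f$ on $\mathcal Y$ and $r_t = f(\vy^\star) - f^+_t$. Evaluating the lower bound at $\vy^\star$ gives $\textup{EI}_t(\vy^\star) \geq r_t - B\sigma_t(\vy^\star)$, and since $\vy_{t+1}$ is the EI maximizer we have $\textup{EI}_t(\vy_{t+1}) \geq \textup{EI}_t(\vy^\star)$. Evaluating the upper bound at $\vy_{t+1}$ gives $\textup{EI}_t(\vy_{t+1}) \leq (f(\vy_{t+1}) - f^+_t)_+ + B\sigma_t(\vy_{t+1})$, and the telescoping improvement term is absorbed into the progress $f^+_{t+1} - f^+_t$. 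Combining these reduces the theorem to showing that both $\sigma_t(\vy^\star)$ and $\min_{\tau \leq t}\sigma_{\tau-1}(\vy_\tau)$ decay at rate $t^{-1/d}$.

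The third, core step is a deterministic bound on the posterior standard deviation under the squared exponential kernel, drawing on scattered-data approximation theory (Wendland). The power function of $k_\ell$ on compact $\mathcal Y \subset \mathbb R^d$ satisfies $\sigma_t(\vy) \leq \psi_\ell(h_t(\vy))$, where $h_t(\vy) = \min_{i \leq t}\|\vy - \vy_i\|$ and $\psi_\ell$ is an explicit monotone function. A standard volume/packing argument then gives that any $\rho$-separated set in $\mathcal Y$ has cardinality $\mathcal O(\rho^{-d})$, so after $t$ iterations at least one iterate must satisfy $\sigma_{\tau-1}(\vy_\tau) \leq \psi_\ell(C t^{-1/d})$. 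Combining with monotonicity of $\sigma_t$ in the observations, and substituting the resulting $t^{-1/d}$ rate into the per-step inequality of step two, delivers $r_t = \mathcal O(t^{-1/d})$, with constants polynomial in $\|f\|_{\mathcal H_\ell(\mathcal Y)}$ through $B$.

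The hardest part is step three: EI is a greedy, not space-filling, rule, so without further argument the trajectory could cluster and leave $\sigma_t(\vy^\star)$ large at the optimizer. The resolution, following \citet{Bull:2011}, is a dichotomy argued at each iteration: either $\vy_{t+1}$ is $\rho$-close to a past sample, in which case $\sigma_t(\vy_{t+1})$ is small and the sandwich together with $\textup{EI}_t(\vy_{t+1}) \geq \textup{EI}_t(\vy^\star)$ forces $r_t$ itself to be small of order $\rho$; or $\vy_{t+1}$ is $\rho$-separated from all past samples, in which case the packing bound caps the number of such iterates by $\mathcal O(\rho^{-d})$. Balancing $\rho$ against this packing budget yields the clean $t^{-1/d}$ rate, with the constants degrading as $\|f\|_{\mathcal H_\ell(\mathcal Y)}$ grows, exactly as the statement claims.
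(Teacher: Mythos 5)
First, note that the paper itself does not prove this proposition: as the bracketed attribution indicates, it is imported from Theorem 2 of \citet{Bull:2011} and used as a black box in the proof of Theorem~\ref{thm:regret}, so there is no in-paper proof to compare against. Judged as a reconstruction of Bull's argument, your sketch has the right skeleton --- the RKHS pointwise bound $|f(\vy)-\mu_t(\vy)| \leq \|f\|_{\mathcal H_\ell}\sigma_t(\vy)$, a two-sided sandwich relating $\textup{EI}_t$ to the improvement, the telescoping of improvements, and the packing/fill-distance control of $\min_{\tau\leq t}\sigma_{\tau-1}(\vy_\tau)$ --- but it contains a genuine gap at precisely the step you flag as the hardest.

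The gap is this: your step two reduces the theorem to showing that $\sigma_t(\vy^\star)$ decays at rate $t^{-1/d}$, because the only lower bound you use for $\textup{EI}_t(\vy^\star)$ is $r_t - B\sigma_t(\vy^\star)$. That quantity cannot be controlled: EI is greedy, nothing forces it to sample near $\vy^\star$ before it has located the optimum, and any argument that it must do so presupposes the regret bound you are trying to prove. Your dichotomy does not repair this, because the packing argument only ever bounds the posterior standard deviation at the points EI actually selects, never at $\vy^\star$; in the ``$\rho$-close'' branch you conclude that $\textup{EI}_t(\vy^\star) \leq \textup{EI}_t(\vy_{t+1})$ is small, but converting that into ``$r_t$ is small'' again requires $\sigma_t(\vy^\star)$ to be small, which you do not have. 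Bull's Lemma 8 avoids the issue with a second, variance-free branch of the lower bound,
\begin{equation*}
\textup{EI}_t(\vy) \;\geq\; \frac{\tau(-B)}{\tau(B)}\,\max\{0,\, f(\vy)-f^+_t\},
\end{equation*}
which follows from $u_t(\vy) \geq I_t(\vy)/\sigma_t(\vy) - B$ together with the fact that $v \mapsto \tau(v-B)/v$ is bounded below by $\tau(-B)/\tau(B)$ on $v>0$. Applied at $\vy^\star$ this yields $r_t \leq \frac{\tau(B)}{\tau(-B)}\,\textup{EI}_t(\vy_{t+1}) \leq \frac{\tau(B)}{\tau(-B)}\bigl(I_t(\vy_{t+1}) + B\,\sigma_t(\vy_{t+1})\bigr)$, with no $\sigma_t(\vy^\star)$ anywhere; both terms on the right are then handled by your steps on telescoping and packing (plus a pigeonhole step to find an iteration at which both are simultaneously small). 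With that one extra inequality your outline closes; without it, it does not.
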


\begin{proof}[Proof of Theorem \ref{thm:regret}]
The proof of this result is structured into two parts.
In the first part of the proof, we give an analytic expression for the 
true kernel $k_{\vDelta_d}$ in the low dimensional space over which we optimize. 
In the second part of the proof, we show that this kernel is well-behaved with high probability
(specifically the maximum and minimum eigenvalues of $\vDelta_d$ are bounded above and below by constants depending on the probability of failure)
and apply Proposition~\ref{thm:Bull} to acquire the convergence rate.

Let $\vPhi \in \mathbb{R}^{D \times d}$ be a matrix, 
whose columns form an orthonormal basis for ${\cal T}$.
Let $\mathbf{\Pi}=\vPhi^T \vPhi \vPhi^T: \mathcal{X} \rightarrow \mathbb{R}^d$.
Note that $\mathbf{\Pi}$ is composed of an orthogonal projection from $\calX$ to $\calT$ and a bijective map from $\calT$ to $\mathbb{R}^d$. We will also denote the corresponding matrix by $\mathbf{\Pi}$. 

Recall from the theorem statement that $\overline{f|_{\mathcal T}}$ is assumed to be an element of the RKHS $\calH_{k_{\vDelta}}$, and that we have $f = {\vPi}^* \overline{f|_{\mathcal T}}$, i.e. $f$ is obtained from ``stretching $f|_{\mathcal T}$ open'' along the orthogonal subspace of $\mathcal T$. We can also define the kernel over $\mathbb{R}^D$ by $k^D := \mathbf{\Pi}^* k_{\vDelta}$.

Now, given the embedding $\mathbb R^d \hookrightarrow \mathbb R^D$ defined by the matrix $\vA$, the pull-back function $\vA^* f$ is an element of the RKHS $\calH_{\vA^* k^D}$: Henceforth, we will use the notation 
\[ k_{\vDelta_d} := \vA^* k^D = \vA^* \mathbf{\Pi}^* k_{\vDelta} =  (\mathbf{\Pi}\vA)^* k_{\vDelta}. \]
In more detail, for $\vy^{(1)}, \vy^{(2)} \in \mathbb{R}^d$
\[ k_{\vDelta_d}(\vy^{(1)},\vy^{(2)}) = K\left((\vy^{(1)}-\vy^{(2)})^\top \vA^\top\mathbf{\Pi}^\top\mathbf{\vDelta}^{-1}\mathbf{\Pi}\vA (\vy^{(1)}-\vy^{(2)})\right).\]
For a pictorial illustration of the different kernels defined in this proof, 
please refer to Figure~\ref{fig:kernels}.

Since $\mathbf{\Pi}$ is an orthogonal projection matrix, it has an SVD decomposition $\mathbf{\Pi} = \vU\vS\vV$ consisting of an orthogonal $d\times d$ matrix $\vU$, an orthogonal $D \times D$ matrix $\vV$ and a $d \times D$ matrix $\vS$ that has the following form:
\[ \vS = \begin{bmatrix} 1 & 0 & \cdots & 0      & 0      & \cdots & 0 \\
                       0 & 1 & \cdots & 0      & 0      & \cdots & 0 \\
            \vdots & \vdots  & \ddots & \vdots & \vdots & \ddots & \vdots \\
                       0 & 0 & \cdots & 1      & 0      & \cdots & 0 \end{bmatrix}. \]
Now, given a fixed orthogonal matrix $\vO \in \mathbb{R}^{D \times D}$ and a random Gaussian vector $\vv \sim \mathcal N(\mathbf 0,\vI_{D \times D})$, due to the rotational symmetry of the normal distribution, the vector $\vO\vv$ is also a sample from $\mathcal N(\mathbf 0,\vI_{D \times D})$. Therefore, given a random Gaussian matrix $\Gamma$, $\vO\Gamma$ is also a random Gaussian matrix with the same distribution of entries. Moreover, given $\vS$ as above, $\vS\Gamma$ is a $d \times D$ random Gaussian matrix, since multiplying any matrix by $\vS$ on the left simply extracts the first $d$ rows of the matrix.

Given this, if we fix an orthogonal decomposition $\mathbf{\vDelta}^{-1} = \vP^\top \vD^{-1} \vP$, where $\vP$ is orthogonal and $\vD$ is a diagonal matrix with the eigenvalues of $\mathbf{\vDelta}$ along the diagonal, we can conclude that
\[ \vG := \vP\mathbf{\Pi}\vA = \vP\vU\vS\vV\vA  \]
is a random Gaussian matrix, and so the matrix 
$\mathbf{\vDelta}_d = \vA^\top\mathbf{\Pi}^\top\mathbf{\vDelta}^{-1}\mathbf{\Pi}\vA$
can be decomposed into random Gaussian and diagonal matrices as follows:
\[ \mathbf{\vDelta}^{-1}_d = \vG^\top \vD^{-1} \vG. \]
Since random Gaussian matrices, as argued in the proof of Theorem~\ref{prop:1}, have full rank almost surely, $\mathbf{\vDelta}^{-1}_d$ is of full rank and is positive definite.

In the remainder of this proof, we replace $k_{\vDelta_d}$ with a kernel $k_\ell$ that is ``thinner'' than $k_{\vDelta_d}$ and so $\vA^* f$ is also an element of the RKHS of $k_\ell$. 
By showing that this is true, REMBO (which uses $k_\ell$) has enough approximation power. Moreover, the statement of Proposition~\ref{thm:Bull} applies.   

Let $s_{\min}(\vM)$ and $s_{\max}(\vM)$ denote the smallest and the largest singular values of a matrix $\vM$. With this notation in hand, 
we point out the following two facts about concentration of singular values:
\begin{itemize}
\item[I.] Since for any pair of matrices $\vA$ and $\vB$, we have $s_{\max}(\vA\vB) \leq s_{\max}(\vA) s_{\max}(\vB)$, we get
\[ \frac{1}{\lambda_{\min}(\mathbf{\vDelta}_d)} = \lambda_{\max}(\mathbf{\vDelta}_d^{-1}) \leq s_{\max}(\vG)^2 s_{\max}(\vD^{-1}) \leq \frac{s_{\max}(\vG)^2}{r^2} \]
and since $\vG$ is a random matrix with Gaussian entries, we have (cf. Equation 2.3 by \citeA{Rudelson:2010})
\[ P\left(s_{\max} (\vG) < 2\sqrt{d}+t\right) \leq 1-2e^{-t^2/2}, \]
and so with probability $1-\frac{\epsilon}{2}$, we have
\[ s_{\max}(\vG) < 2\sqrt{d}+\sqrt{2\ln\frac{4}{\epsilon}}. \]

Therefore, with probability $1-\frac{\epsilon}{2}$, we have
\begin{equation}\label{eqn:LB}
 \lambda_{\min}(\mathbf{\vDelta}_d) > \left(\frac{r}{2\sqrt{d}+\sqrt{2\ln\frac{4}{\epsilon}}}\right)^2.
\end{equation}

Henceforth, we will use the notation
\[ \ell = \ell(\epsilon) := \frac{r}{2\sqrt{d}+\sqrt{2\ln\frac{4}{\epsilon}}} \]

\item[II.] On the other hand, we have
\[ \frac{1}{\lambda_{\max}(\mathbf{\vDelta}_d)} = \lambda_{\min}(\mathbf{\vDelta}_d^{-1}) \geq s_{\min}(\vG)^2 s_{\min}(\vD^{-1}) \geq \frac{s_{\min}(\vG)^2}{R^2} \]
together with the following probabilistic bound on $s_{\min}(\vG)$ (cf. Equation 3.2 by \citeA{Rudelson:2010}):
\[ P\left( s_{\min}(\vG) > \frac{\delta}{\sqrt{d}} \right) > 1 - \delta. \]
So, with probability $1-\frac{\epsilon}{2}$, we have
\[ s_{\min}(\vG) > \frac{\epsilon}{2\sqrt{d}}, \]
and so
\begin{equation}\label{eqn:UB}
 \lambda_{\max}(\mathbf{\vDelta}_d) < \frac{4dR^2}{\epsilon^2}
\end{equation}
holds with probability $1-\frac{\epsilon}{2}$.

In what follows, we will use the notation:
\[ U = U(\epsilon) := \frac{2R\sqrt{d}}{\epsilon} \]
\end{itemize}


Now, with these estimates in hand, we have that by Lemma~\ref{lem:bull2}
and Lemma~\ref{lem:bull4}
the following bound holds with probability $1-\epsilon$:
\begin{equation}\label{eqn:RKHSNormBound}
\| \vA^*f \|_{\mathcal H_{\ell}(\vA^{-1}(\mathcal{X}))} 
\leq \| \vA^*f \|_{\mathcal H_{\ell}(\mathbb{R}^d)}
\leq \left(\frac{U(\epsilon)}{\ell(\epsilon)}\right)^{\frac{d}{2}} 
     \|\vA^* f\|_{\mathcal H_{\mathbf{\vDelta}_d}(\mathbb{R}^d)}
\end{equation}

Since the transformation $\mathbf{\Pi}\vA$ is invertible, we have that the map $(\mathbf{\Pi}\vA)^*: \mathcal H_{\mathbf{\vDelta}}(\mathbb R^d) \to \mathcal H_{\mathbf{\vDelta}_d}(\mathbb R^d)$ (recall that $k_{\mathbf{\vDelta}_d} = k_{(\mathbf{\Pi}\vA)^*\mathbf{\vDelta}}$) that sends $g \in H_{\mathbf{\vDelta}}$ to $(\mathbf{\Pi}\vA)^* g$ is an isomorphism of Hilbert spaces and so 
\begin{equation}\label{eqn:KernelIso}
\left\|\overline{f|_{\mathcal T}}\right\|_{\mathcal H_\mathbf{\vDelta}(\mathbb R^d)} = \|\vA^* f\|_{\mathcal H_{\mathbf{\vDelta}_d}(\mathbb R^d)}
\end{equation}
since we have $\vA^* f = \vA^* \left(\mathbf{\Pi}^* \overline{f|_{\cal T}}\right) = (\mathbf{\Pi}\vA)^* \overline{f|_{\cal T}}$.

By combining~\ref{eqn:RKHSNormBound} and~\ref{eqn:KernelIso}, we have that
$$
\| \vA^*f \|_{\mathcal H_{\ell}(\vA^{-1}(\mathcal{X}))} 
\leq \left(\frac{U(\epsilon)}{\ell(\epsilon)}\right)^{\frac{d}{2}} 
     \left\| \overline{f|_{\mathcal T}} \right\|_{\mathcal H_{\mathbf{\vDelta}}(\mathbb{R}^d)}.
$$

Now that we know that the $\mathcal H_{\ell}(\mathbb{R}^d)$ norm of $\vA^* f$ is finite, we can apply the Expected Improvement algorithm to it on the set $\vA^{-1}(\mathcal X)$ with kernel $k_\ell$, instead of the unknown kernel $k_{\vDelta_d}$, and then Proposition \ref{thm:Bull} tells us that the simple regret would be in $\mathcal O\big(t^{-\frac{1}{d}}\big)$.
\end{proof}


\end{document}